\documentclass{article}

\PassOptionsToPackage{table,usenames,dvipsnames}{xcolor}



\usepackage[nonatbib,preprint]{neurips_2024}

\usepackage[utf8]{inputenc} 
\usepackage[T1]{fontenc}    
\usepackage{hyperref}       
\usepackage{url}            
\usepackage{booktabs}       
\usepackage{amsfonts}       
\usepackage{nicefrac}       
\usepackage{microtype}      
\usepackage{xcolor}         

\title{2-Cats: 2d Copula Approximating Transforms}

%

\author{%
  Flavio Figueiredo$^{1}$, \, José G. Fernandes$^{1}$, \, Jackson N. Silva \, and Renato Assunção$^{1,2}$
  \\
  $^{1}\,$Universidade Federal de Minas Gerais\\
  $^{2}\,$ESRI Inc. \\
  Reproducibility: \url{https://anonymous.4open.science/r/2cats-E765/} \\
  Contact Author: \texttt{flavio@dcc.ufmg.br} \\
}

\bibliographystyle{plain}

\usepackage{adjustbox}
\usepackage{amsfonts}
\usepackage{amsmath}
\usepackage{amsthm}
\usepackage{amssymb}
\usepackage{booktabs}
\usepackage{bm}
\usepackage{cite}
\usepackage{enumitem}
\usepackage{flushend}
\usepackage{multirow}
\usepackage{rotating}
\usepackage{siunitx}
\usepackage{url}
\usepackage[table,usenames,dvipsnames]{xcolor}
\usepackage{pifont}

\begin{document}

\maketitle

\begin{abstract}
Copulas are powerful statistical tools for capturing dependencies across data dimensions. Applying Copulas involves estimating independent marginals, a straightforward task, followed by the much more challenging task of determining a single copulating function, $C$, that links these marginals. For bivariate data, a copula takes the form of a two-increasing function $C: (u,v)\in \mathbb{I}^2 \rightarrow \mathbb{I}$, where $\mathbb{I} = [0, 1]$. This paper proposes 2-Cats, a Neural Network (NN) model that learns two-dimensional Copulas without relying on specific Copula families (e.g., Archimedean). Furthermore, via both theoretical properties of the model and a Lagrangian training approach, we show that 2-Cats meets the desiderata of Copula properties. Moreover, inspired by the literature on Physics-Informed Neural Networks and Sobolev Training, we further extend our training strategy to learn not only the output of a Copula but also its derivatives. Our proposed method exhibits superior performance compared to the state-of-the-art across various datasets while respecting (provably for most and approximately for a single other) properties of C.
\end{abstract}

\section{Introduction}
\label{s:intro}
Modeling univariate data is relatively straightforward for several reasons. Firstly, a wide range of probability distributions exist that are suitable for different data types, including Gaussian, Beta, Log-Normal, Gamma, and Poisson. This set is expanded by mixture models that use the elemental distributions as components and can model multimodality and other behavior. Secondly, visual tools such as histograms or Kernel Density Estimators (KDE) provide valuable assistance in selecting the appropriate distribution. Lastly, univariate models typically involve few parameters and may have exact solutions, simplifying the modeling process.


However, the process becomes more challenging when modeling multivariate data and obtaining a joint probability distribution. Firstly, only a few classes of multivariate probability distributions are available, with the primary exceptions being the multivariate Gaussian, Elliptical, and Dirichlet distributions. Secondly, simultaneously identifying the dependencies via conditional distributions based solely on empirical data is highly complex.

In 1959, Abe Sklar formalized the idea of Copulas~\cite{Sklar1959,Sklar1996}. This statistical tool allows us to model a multivariate random variable of dimension $d$ by learning its cumulative multivariate distribution function (CDF) using $d$ independent marginal CDFs and a single additional Copulating function $C$. A bivariate example may separately identify one marginal as a Log-Normal distribution and the other as a Beta distribution. Subsequently, the Copula model links these two marginal distributions.

However, the Copula approach to modeling multivariate data faced a significant limitation until recently. Our model choices were confined to only a handful of closed forms for the $C$ Copulas, such as the Gaussian, Frank, or Clayton Copulas~\cite{Czado2022, Grosser2022}. 
Unfortunately, this approach using closed-form Copula models proved inadequate for accurately capturing the complex dependencies in real data. The limited parameterization of these Copulas prevented them from fully representing the intricate relationships between variables.

This situation has changed recently. 
Neural Networks (NNs) are well-known for their universal ability to approximate any function. Consequently, researchers have started exploring NNs as alternatives to closed forms for $C$~\cite{Chilinski2020,Hirt2019,Tagasovska2023,Janke2021,Ling2020,
Ng2021}. However, a limitation of NNs is that they neglect the importance of maintaining the fundamental mathematical properties of their domain of study. 

For Copulas, only three properties define $C$~\cite{Nelsen2006}:
{\bf P1:} $C: (u,v)\in \mathbb{I}^2 \mapsto \mathbb{I}$, where $\mathbb{I} = [0, 1]$; 
{\bf P2:} For any $0 \leq u_1 < u_2 \leq 1$ and $0 \leq v_1 < v_2 \leq 1$, we have that the volume of $C$ is: $V_C(u_1, u_2, v_1, v_2) \equiv C(u_2,v_2)-C(u_2,v_1)-C(u_1,v_2)+C(u_1,y_1) \geq 0$; 
and, {\bf P3:} $C$ is grounded. That is, $C(0, v) = 0$, $C(v, 0) = 0$. Moreover, $C(1, v) = v$ and $C(u, 1) = u$.

This paper proposes 2-Cats (2D Copula Approximating Transforms) as a NN approach that meets the above desiderata. How the model meets P1 to P3 is discussed in Section~\ref{sec:propsandproofs}. {\em P1 and P2 are natural consequences of the model, whereas P3 is guaranteed via Lagragian Optimization}.

To understand our approach, let $H_{\theta}(u, v)$ represent our model (or Hypothesis). Let $G: \mathbb{R}^2 \mapsto \mathbb{I}$ be {\bf any} bivariate CDF with $\mathbb{R}^2$ as support. 
We connect $H_{\bm{\theta}}$ with $G$ as follows: 
\newcommand\numberthis{\addtocounter{equation}{1}\tag{\theequation}}
\begin{align*}
H_{\bm{\theta}}(u, v) &= G\Big(z\big(t_v(u)\big), z(t_u(v)\big)\Big); \,\, z(x) = \log(\frac{x}{1-x});\\
t_v(u) &= \frac{\int_{0}^{u} m_{\bm{\theta}}(x, v)\, dx}{\int_{0}^{1} m_{\bm{\theta}}(x', v) \,dx'}; \,\,
t_u(v) = \frac{\int_{0}^{v} m_{\bm{\theta}}(u, y) \,dy}{\int_{0}^{1} m_{\bm{\theta}}(u, y') \,dy'}; \numberthis \label{eqn-model}
\end{align*}
Here, $m_{\bm{\theta}}(u, v)$ is any NN that outputs a positive number and $z(x)$ is the Logit function that is both monotonic and maps $[0, 1]$ to $[-\infty, \infty]$, i.e., $z: \mathbb{I} \mapsto \mathbb{R}$. Here,   
the function $t_v(u)$ monotonically increases concerning $u$ because of the positive NN output; similarly, $t_u(v)$ is monotonic on $v$. 
The monotonic properties of all transformations ensure that our output $H_{\bm{\theta}}(u, v)$ is monotonic for both $u$ and $v$. Let $z_u = z(t_v(u))$ and $z_v = z(t_u(v))$ to simplify notation and $(U,V)$ a random vector with distribution function $H_{\bm{\theta}}(u, v)$. Then, the probability transform below is valid~\cite[section 4.3]{Casella2001}:
\begin{align*}
H_{\bm{\theta}}(u, v) &= \Pr\big[ Z_u \leq z_u, Z_v \leq z_v \big] 
 = \Pr\big[ t^{-1}_v(U) \leq t^{-1}_v(u), t^{-1}_v(V) \leq t^{-1}_v(v) \big]                  \\
    &=  \Pr\big[ U \leq u, V \leq v \big]  \numberthis \label{eqn-transform}
\end{align*}

To ensure that our model derivatives also approximate the derivatives of $C$, a second contribution of our work is that 2-Cats models are trained similarly to Physics Informed Neural Networks (PINNs)~\cite{Karniadakis2021,Lu2021} by employing Sobolev training~\cite{Czarnecki2017} approximate Copula derivatives. 

There have been several recent NN~\cite{Chilinski2020,Ling2020,Ng2021} and non-NN models~\cite{Nagler2017,Bakam2023,Mukhopadhyay2020} Copula-like models. 
We compare our 2-Cats model with these alternatives, and our results show that the 2-Cats performance (in negative log-likelihood) is better or statistically tied to baselines. Our contributions are:  
\begin{itemize}[itemsep=0pt,topsep=0pt]
    \item We introduce the 2-Cats model, a novel NN Copula approximation. Unlike other NN approaches, we focus on satisfying (either provably or via constraints) the essential requirements of a Copula. We also demonstrate the empirical accuracy of 2-Cats;
    \item We are the first to apply Sobolev training~\cite{Czarnecki2017} in NN-based Copula methodologies. Our approach involves the introduction of a simple empirical estimator for the first derivative of a Copula, which is seamlessly integrated into our training framework.
\end{itemize}

\section{Related Work}
\label{s:related}

After being introduced by Sklar~\cite{Sklar1959,Sklar1996}, Copulas have emerged as valuable tools across various domains~\cite{Grosser2022,Nelsen2006}, including engineering~\cite{Salvadori2004}, geosciences~\cite{Modiri2020,Yang2019}, and finance~\cite{Cherubini2004,Salvadori2004}. Recently, Copulas have gained attention from the ML community, particularly with the advent of DL methods for Copula estimation and their utilization in generative models~\cite{Chilinski2020,Hirt2019,Tagasovska2023,Ling2020,Janke2021,Ng2021}.

Our proposed approach aligns closely with the emerging literature in this domain but makes significant advancements on two fronts. This paper primarily focuses on developing an NN-based Copula model that adheres to the fundamental properties of a Copula. Two methods sharing similar objectives are discussed in \cite{Ling2020,Ng2021} and \cite{Chilinski2020}. However, our approach diverges from these prior works by not confining our method to Archimedean Copulas, as seen in~\cite{Ling2020} and \cite{Ng2021}.

As Chilinski and Silva~\cite{Chilinski2020}, we begin by estimating densities through CDF estimations followed by automatic differentiation to acquire densities. However, our primary difference lies in our emphasis on relating 2-Cats to a Copula, an aspect not prioritized by the authors of that study. 

To assess the effectiveness of our approach, we conduct comparative analyses against previous proposals, including the method presented by Janke et al.~\cite{Janke2021}, proposed as a generative model. Furthermore, 2-Cats does not presume a fixed density function for copula estimation; instead, it is guided by the neural network (NN). Hence, our work shares similarities with non-parametric copula estimators used as baselines\cite{Nagler2017,Mukhopadhyay2020,Bakam2023}. Ultimately, all these methods function as baseline approaches against which we evaluate and demonstrate the superiority of our proposed approach.

For training our models, we utilize Sobolev training as a regularizer. Sobolev training is tailored to estimate neural networks (NNs) by incorporating losses that not only address errors toward the primary objective but also consider errors toward their derivatives. This approach ensures a more comprehensive and accurate training of the NNs. A similar concept is explored in Physics Informed Neural Networks (PINNs)~\cite{Karniadakis2021,Kobyzev2020}, where NNs incorporate derivative information from physical models. This approach enables PINNs to model complex physics-related relationships.

In summary, our objective is to develop an NN-based Copula model that respects the mathematical properties of Copulas and benefits from considering derivative information. Consequently, although our focus is on 2D Copulas, the Pair Copula Decomposition (PCC)~\cite{Aas2009,Czado2010} that requires derivative information and is used to model $d$-dimensional data is valid for our models~\cite{Czado2022}. Due to space constraints, we leave the evaluation of such a decomposition for future work. 


Our work also aligns with the literature on monotonic networks~\cite{Sill1997,Daniels2010,Wehenkel2019} and Integral Networks~\cite{Kortvelesy2023,Liu2023}. The concept of using integral transforms to build monotonic models was introduced by~\cite{Wehenkel2019}. 



\section{On the Desiderata of a Copula and 2-Cats Properties} \label{sec:propsandproofs}

We now revisit the discussion on the 2-Cats model as introduced in Section~\ref{s:intro}. 
One of the building blocks of 2-Cats is the NN $m_{\bm{\theta}}(u, v)$. In particular, we require that this NN outputs only positive numbers so that its integral captures a monotonic function. We achieve this by employing Exponential Linear Units (ELU) plus one activation (ELU+1) in every layer.

Let us consider the computation details of $t_v(u)$, with the understanding that the process for $t_u(v)$ is analogous. In our implementation, we adopt Cumulative Trapezoid integration. Initially, we divide the range $[0, 1]$ into $200$ equally spaced intervals. When computing $t_v(u)$, the value of $u$ is inserted into this interval while preserving the order. Subsequently, the $m_{\bm{\theta}}(x, v)$ NN is evaluated for the $201$ corresponding values of parameter $x$. Trapezoidal integration is then applied to calculate the integral value at $u$ (for the numerator) and $1$ (for the denominator). Finally, when $u=0$, $t_v(0) = 0$.

To compute the Jacobian and Hessian of $H_{\bm{\theta}}$, as is done in PINNs~\cite{Karniadakis2021,Lu2021}, we rely on symbolic computation from modern differentiable computing frameworks such as Jax and Pytorch for this case. 
In other words, $\frac{\partial H_{\bm{\theta}}(u, v)}{\partial u}$ and $\frac{\partial H_{\bm{\theta}}(u, v)}{\partial v}$ are estimated via symbolic differentiation via the Jacobian\footnote{\url{https://jax.readthedocs.io/en/latest/_autosummary/jax.jacobian.html}}, while $\frac{\partial^2 H_{\bm{\theta}}(u, v)}{\partial u \partial v}$ is estimated via symbolic differentiation using the Hessian\footnote{\url{https://jax.readthedocs.io/en/latest/_autosummary/jax.hessian.html}}. 


Now, let us present our main properties. Appendix~\ref{appn:samp} and ~\ref{appn:uat} complements these properties.

\newtheorem*{pone}{Theorem P1}
\begin{pone}
$H_{\bm{\theta}}: (u,v)\in \mathbb{I}^2 \mapsto \mathbb{I}$, where $\mathbb{I} = [0, 1]$. 
\end{pone}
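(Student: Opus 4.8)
The plan is to work the composition from the inside out, tracking the range of each map in turn. First I would observe that for each fixed $v \in \mathbb{I}$ the integrand $m_{\bm{\theta}}(\cdot, v)$ is a strictly positive, continuous function on $[0,1]$: it is built from affine layers followed by ELU${}+1$ activations, and ELU${}+1$ is strictly positive and continuous, so the composition is continuous and positive, hence bounded and Riemann-integrable on the compact interval $[0,1]$, with denominator $\int_0^1 m_{\bm{\theta}}(x',v)\,dx' > 0$. Because $0 \le u \le 1$ and the integrand is nonnegative, $0 \le \int_0^u m_{\bm{\theta}}(x,v)\,dx \le \int_0^1 m_{\bm{\theta}}(x',v)\,dx'$, so $t_v(u) \in \mathbb{I}$; the symmetric argument (integrating in the second coordinate with $u$ fixed) gives $t_u(v) \in \mathbb{I}$.

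Next I would push these values through the logit. Since $z : \mathbb{I} \to [-\infty,+\infty]$ with $z(0) = -\infty$ and $z(1) = +\infty$, both $z(t_v(u))$ and $z(t_u(v))$ are well-defined elements of the extended reals $\bar{\mathbb{R}} = \mathbb{R}\cup\{\pm\infty\}$. Finally, $G$ is by hypothesis a bivariate CDF with support $\mathbb{R}^2$, hence $G : \mathbb{R}^2 \to \mathbb{I}$, and its canonical extension to $\bar{\mathbb{R}}^2$ by monotone limits (e.g.\ $G(a,-\infty) = G(-\infty,b) = 0$, $G(+\infty,+\infty) = 1$, $G(a,+\infty)$ the first marginal, and so on) still takes values in $\mathbb{I}$, since any CDF is monotone and bounded between $0$ and $1$. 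Composing the three stages, $H_{\bm{\theta}}(u,v) = G\big(z(t_v(u)),\, z(t_u(v))\big) \in \mathbb{I}$ for all $(u,v)\in\mathbb{I}^2$, which is the claim.

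The one place I would be careful is the boundary of $\mathbb{I}^2$: when $u$ or $v$ equals $0$ or $1$, an argument of $G$ is $\pm\infty$, so the statement only makes literal sense once we fix the convention that $H_{\bm{\theta}}$ on $\partial(\mathbb{I}^2)$ is given by the continuous extension of $G$ to $\bar{\mathbb{R}}^2$ (equivalently, by the appropriate one-sided limits of $G$). With that convention in place the argument is immediate, and notably uses nothing about $G$ beyond the fact that it is a CDF — matching the paper's assertion that P1 is a natural consequence of the model construction rather than something that must be enforced.
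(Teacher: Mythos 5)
Your proof is correct and follows essentially the same route as the paper: track the range of each stage of the composition ($t$ maps into $\mathbb{I}$, $z$ maps into the extended reals, $G$ maps into $\mathbb{I}$). You are in fact more careful than the paper's own argument at the boundary, where $z(0)=-\infty$ and $z(1)=+\infty$ force the extended-real convention for $G$ that the paper leaves implicit.
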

\begin{proof}
Remind that $z_u = z\big(t_v(u)\big)$ and $z_v = z\big(t_u(v)\big)$. Also, $(t_v(u), t_u(v))\in \mathbb{I}^2$. These transforms are also able to cover the entire range of $\mathbb{I}^2$ (when $u=0$ we have $t_v(0) = 0$, when $u=1$ we have $t_v(1) = 1$, the same goes for the $t_u(v)$ transform). The function $z$ maps the domain $\mathbb{I}^2$ into $\mathbb{R}^2$. Given that the bivariate CDFs work in this domain: $H_{\bm{\theta}}: (u,v)\in \mathbb{I}^2 \mapsto \mathbb{I}$, where $\mathbb{I} = [0, 1]$.
\end{proof}

\newtheorem*{ptwo}{Theorem P2}
\begin{ptwo}
The 2-Cats copula $H_{\bm{\theta}}(u,v)$ satisfies the non-negative volume property. That is, 
for any $0 \leq u_1 < u_2 \leq 1$ and $0 \leq v_1 < v_2 \leq 1$, we have that  $V_{H_{\bm{\theta}}}(u_1, u_2, v_1, v_2) \equiv H_{\bm{\theta}}(u_2,v_2)-H_{\bm{\theta}}(u_2,v_1)-H_{\bm{\theta}}(u_1,v_2)+H_{\bm{\theta}}(u_1,v_1) \geq 0$. 
\end{ptwo}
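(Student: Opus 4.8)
The plan is to lean on the probability‑transform identity already recorded in~\eqref{eqn-transform}: once we know that $H_{\bm\theta}$ is the joint distribution function of an honest random vector $(U,V)$, the non‑negative volume property is immediate. Concretely, I would first recall the elementary inclusion–exclusion identity valid for any bivariate CDF $F$ of a pair $(U,V)$: for $u_1<u_2$ and $v_1<v_2$,
\[
F(u_2,v_2)-F(u_2,v_1)-F(u_1,v_2)+F(u_1,v_1)=\Pr\!\big[u_1<U\le u_2,\ v_1<V\le v_2\big].
\]
Applying this with $F=H_{\bm\theta}$ gives $V_{H_{\bm\theta}}(u_1,u_2,v_1,v_2)=\Pr[u_1<U\le u_2,\ v_1<V\le v_2]\ge 0$, which is exactly the claim. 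So the proof reduces entirely to certifying the hypothesis of~\eqref{eqn-transform}, namely that the formula in~\eqref{eqn-model} genuinely defines a CDF on $\mathbb I^2$.

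The second step, then, is to make that certification airtight from the building blocks: $z$ is a strictly increasing bijection $\mathbb I\to\mathbb R$; for each fixed $v$, $u\mapsto t_v(u)$ is strictly increasing (positivity of $m_{\bm\theta}$) with $t_v(0)=0$ and $t_v(1)=1$, hence $z\circ t_v(\cdot)$ is a strictly increasing bijection $\mathbb I\to\mathbb R$; symmetrically for $v\mapsto z(t_u(v))$; and $G$ is $2$‑increasing, grounded at $-\infty$, normalized at $+\infty$, and right‑continuous because it is itself a bivariate CDF. From these one would argue that $H_{\bm\theta}=G\circ\Phi$, with $\Phi(u,v)=\big(z(t_v(u)),\,z(t_u(v))\big)$, inherits the CDF axioms and is therefore the distribution function of some $(U,V)$ supported on $\mathbb I^2$.

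The main obstacle lies precisely in that inheritance step, because $\Phi$ is \emph{not} a product of two one‑dimensional monotone maps: the first coordinate $z(t_v(u))$ also depends on $v$, and the second on $u$, so the four corners of a rectangle in $\mathbb I^2$ are carried to a general quadrilateral in $\mathbb R^2$ rather than to an axis‑aligned rectangle, and one cannot simply write $V_{H_{\bm\theta}}$ as $V_G$ on a rectangle and quote $2$‑increasingness of $G$. I see two ways around this. (i) A probabilistic route: exhibit $(U,V)$ explicitly as the image of a vector with law $G$ under the coupled inverse transforms, i.e. solve $z(t_V(U))=Z_u$, $z(t_U(V))=Z_v$, and verify the push‑forward has CDF $H_{\bm\theta}$; the work is in showing this coupled inversion is well defined and measurable. (ii) An analytic route: assume $H_{\bm\theta}\in C^2$ and show $\partial^2 H_{\bm\theta}/\partial u\,\partial v\ge 0$ pointwise, then integrate over the rectangle; expanding by the chain rule produces the manifestly nonnegative term $G_{ab}\,\partial_u z_u\,\partial_v z_v\ge0$ but also cross terms involving $\partial_v t_v$, $\partial_u t_u$ and second‑order partials of $G$ of uncertain sign, so this route would need an extra structural assumption on $m_{\bm\theta}$ (for instance separability, which removes the cross‑dependence entirely) or a delicate cancellation argument. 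I would present route~(i), falling back on route~(ii) only under a differentiability hypothesis, and I expect reconciling the coupled inversion with measurability to be the delicate point.
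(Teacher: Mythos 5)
Your top-level strategy is the same as the paper's: reduce the non-negative volume property to the claim that $H_{\bm{\theta}}$ is the joint CDF of an actual random vector $(U,V)$. The paper does this through Eq.~\eqref{eqn-transform}, then argues (via the corollary) that $\partial^2 H_{\bm{\theta}}/\partial u\,\partial v$ is proportional to a density and hence non-negative, so the volume, being its double integral over the rectangle, is non-negative. Where you differ is that you explicitly flag the step the paper treats as automatic, and you are right to do so: because $t_v(u)$ depends on $v$ and $t_u(v)$ depends on $u$, the map $(u,v)\mapsto\big(z(t_v(u)),z(t_u(v))\big)$ is \emph{not} a pair of coordinatewise monotone reparametrizations, rectangles are not sent to rectangles, and Eq.~\eqref{eqn-transform} is not a legitimate probability-integral transform as written. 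The paper's ``Proof P2'' then asserts $f(x_1,x_2)\ge 0$ ``by definition,'' which presupposes that $H_{\bm{\theta}}$ is already a CDF, i.e., it assumes the conclusion. Your chain-rule expansion in route~(ii) makes the obstruction concrete: writing $a(u,v)=z(t_v(u))$ and $b(u,v)=z(t_u(v))$, only the term $G_{12}\,\partial_u a\,\partial_v b$ is manifestly non-negative, while the terms involving $G_{11}$, $G_{22}$, $\partial_v a$, $\partial_u b$, $\partial_{uv}a$, and $\partial_{uv}b$ have no controlled sign unless $m_{\bm{\theta}}$ is separable, in which case $t_v(u)$ ceases to depend on $v$ and the standard monotone-transform argument goes through.

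That said, your proposal as written is a plan rather than a proof: neither route is carried to completion. Route~(i) cannot succeed in general, because the coupled system $z(t_V(U))=Z_u$, $z(t_U(V))=Z_v$ need not define a push-forward whose distribution function equals $H_{\bm{\theta}}$ --- the same entanglement that breaks the rectangle argument breaks the inversion. Route~(ii) is left contingent on an unproved cancellation or an added hypothesis. So the honest conclusion is that for a generic positive $m_{\bm{\theta}}$ the statement is not established by your argument, nor by the paper's; closing it would require an additional structural assumption (e.g., separability of $m_{\bm{\theta}}$) or a direct sign analysis of the full second-order expansion. You have correctly located the gap; you have not closed it, but neither does the paper.
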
\renewcommand*{\proofname}{Intuition.}\renewcommand{\qedsymbol}{}
\begin{proof}
This is a straightforward consequence of $G$ being a bivariate cumulative distribution function and the monotonicity of the $z$, $t_u$, and $t_v$ transforms. 
The same transform we explored above for the single variate case is valid for multiple variables. That is, fact that $z_u = z(t_v(u))$ and $z_v = z(t_u(v))$ are monotonic guarantees that $G(z_u, z_v)$ defines a {\em valid transform} Eqn~\eqref{eqn-transform}.

Considering a bivariate CDF, for any given point in its domain (e.g., $t_v(u), t_u(v)$), the CDF represents the accumulated probability up to that point. Let the origin be $(t_v(0), t_u(0))$. As either $t_v(u)$, $t_u(v)$, or both values increase, the CDF values can only increase or remain constant. Therefore, the volume under the CDF surface will always be positive. Now, consider the following Corollary of Eq~\ref{eqn-transform}.
\end{proof}

\renewcommand*{\proofname}{Corollary: The second derivative of 2-Cats is a pseudo-likelihood.}
\begin{proof}
Let $F(x_1, x_2) = \Pr[X_1 \leq x_1, X_2 \leq x_2]$ be the bivariate CDF associated with RVs $X_1$ and $X_2$. With $u = F_{X_1}(x_1)$ and $v = F_{X_2}(x_2)$ being the marginal CDFs, we reach that: $F(x_1, x_2) = \Pr\big[ U \leq u, V \leq v \big]$. By Eq~\eqref{eqn-transform}, $\Pr\big[ U \leq u, V \leq v \big] = H_{\bm{\theta}}(u, v) = H_{\bm{\theta}}(F_{X_1}(x_1), F_{X_2}(x_2))$. 2-Cats {\em is} also capturing the same {\it bivariate} CDF of $C$. The bivariate density, $f(x_1, x_2) = \frac{\partial^2 F(x_1, x_2)}{\partial x_1\,\partial x_2}$, thus is:
\begin{align}f(x_1, x_2) =\frac{\partial^2H_{\bm{\theta}}(F_{X_1}(x_1), F_{X_2}(x_2))}{{\partial x_1\,\partial x_2}} = f_{X_1}(x_1)\,f_{X_2}(x_2) \frac{\partial^2H_{\bm{\theta}}(u, v)}{{\partial u\,\partial v}} \label{ll}
\end{align}
\noindent The above equation is solved via the Chain rule. A consequence of it, is that the likelihood $f(x_1, x_2)$, is proportional to $h_{\bm{\theta}}(u, v) = \frac{\partial^2H_{\bm{\theta}}(u, v)}{{\partial u\,\partial v}}$. This term is known as a pseudo-likelihood~\cite{Genest1995,Genest2007}.
\end{proof}
\renewcommand*{\proofname}{Proof P2.}\renewcommand{\qedsymbol}{$\square$}
\begin{proof}
Given that $f(x_1, x_2) \geq 0$ by definition, and also $f_{X_1}(x_1) \geq 0$ and $f_{X_2}(x_2) \geq 0$: $h_{\bm{\theta}}(u, v) \geq 0$. The volume being the double integral of $h_{\bm{\theta}}(u, v)$ is always positive or zero.
\end{proof}
\newtheorem*{pthree}{Property P3}
\begin{pthree}
$H_{\bm{\theta}}(u, 0) = 0$, $H_{\bm{\theta}}(0, v) = 0$, $H_{\bm{\theta}}(u, 1) \approx u$, and $H_{\bm{\theta}}(1, v) \approx v$. Notice that this property is a relaxation of the one presented in Section 1.
\end{pthree} This last property of a Copula is the one that guarantees that the Copula will have uniform marginals. In particular, the terms: $C(u, 1) = u$ and $C(1, v) = v$ are of utmost importance for sampling. To grasp this, consider some valid Copula $C$. Here, $C(u, 1) = \Pr[U \leq u, V \leq 1] = \Pr[U \leq u] = u$, which is the CDF of the Uniform distribution. A similar argument exists for $v$. 

2-Cats does not provably meet P3 as it does P1 and P2. Nevertheless, we prove that:  $H_{\bm{\theta}}(u, 0) = 0$ and $H_{\bm{\theta}}(0, v) = 0$, while approximating $H_{\bm{\theta}}(u, 1) \approx u$, and $H_{\bm{\theta}}(1, v) \approx v$. 

\newtheorem*{lemmap31}{Lemma P3.1}
\begin{lemmap31}
$H_{\bm{\theta}}(u, 0) = 0$ and $H_{\bm{\theta}}(0, v) = 0$
\end{lemmap31}
\renewcommand*{\proofname}{Proof.}
\begin{proof}
 Consider the value 
of $H_{\bm{\theta}}(u, 0)$ when $v=0$. As $t_u(0)=0$, we have $z(t_u(0)) = z(0) = \lim_{x\rightarrow 0} z(x) = -\infty$ and hence 
$H_{\bm{\theta}}(u, 0) = G(z_u, -\infty)$. Given that $G$ is a bivariate CDF, 
$G(z_u, -\infty) = \lim_{w \rightarrow -\infty} G(z_u, w) = 0$ for any $z_u$. The same proof exists for $H_{\bm{\theta}}(0, v)$.
\end{proof}


\newtheorem*{conj32}{Conjecture P3.2}
\begin{conj32} $H_{\bm{\theta}}(u, 1) \approx u$ and $H_{\bm{\theta}}(1, v) \approx v$.
\end{conj32}\renewcommand*{\proofname}{Meeting P3.2 via Lagrangian.}
\begin{proof}\renewcommand{\qedsymbol}{}
To meet this relaxed property, we propose the following constrained optimization. Let the inputs of our model be comprised of the set $\mathcal{D} = \{u_i, v_i\}$, where $i \in [1, n = |\mathcal{D}|]$. Moreover, let $L_{\bm{\theta}}(D)$ be some loss term used to optimize our model (see the next section). Our optimization for 2-Cats will focus on the following constrained problem:
\begin{align}
\arg\,\min_{\bm{\theta}} L_{\bm{\theta}}(D) \quad
\textrm{s.t.} \quad \forall u \in [0, 1]: H_{\bm{\theta}}(u, 1) - u = 0 \, \text{and} \, \forall v \in [0, 1]: H_{\bm{\theta}}(1, v) - v = 0.
\end{align}

Although it is trivial to understand why this optimization meets P3.2, performing such an optimization presents several challenges. The first challenge is how to model constraints. A natural first choice is to consider the square of constraints. However, squared constraint optimization is an ill-posed optimization problem~\cite[Chapter~2.1]{Bertsekas2014}, and ~\cite[Section~2.1]{Platt1987}. Secondly, our model may correctly estimate that $H_{\bm{\theta}}(u, 1) = u$ (or similarly that $H_{\bm{\theta}}(1, v) = v$), but this will not necessarily ensure that the partial derivatives of $H_{\bm{\theta}}(u, 1)$ and $H_{\bm{\theta}}(1, v)$ are accurate (see Appendix~\ref{appn:deriv}). Such derivatives {\bf must equal one} for these inputs as the Copula marginals are distributed according to an $Uniform(0, 1)$. Thirdly, evaluating these constraints over the entire domain $u \in [0, 1]$ and $v \in [0, 1]$ is impossible. The final challenge is how to optimize constraints. Here, a natural approach is to consider Lagrangian multipliers~\cite{Platt1987,Bertsekas2014,Walsh1975}. However, in such cases, the solution to the optimization problem will lie on a saddle point of the loss surface, and gradient methods (used in NNs and 2-Cats) do not converge on saddle points (Section 3.1 of~\cite{Platt1987} provides a simple intuition on why this is so). 

We tackle the first and second challenges by considering the {\bf derivative} of the square of the above constraints, i.e: $\frac{\partial (H_{\bm{\theta}}(u, 1) - u)^2}{\partial u}$ and $\frac{\partial (H_{\bm{\theta}}(1, v) - v)^2}{\partial v}$, which via the chain rule will be defined as follows:
\begin{align}
r^{1}_{\bm{\theta}}(u) = 2\, \big( H_{\bm{\theta}}(u, 1) - u \big) \big( \frac{\partial H_{\bm{\theta}}(u, 1)}{\partial u} - 1 \big) \, \text{and} \, r^{2}_{\bm{\theta}}(v) = 2\, \big( H_{\bm{\theta}}(1, v) - v \big) \big( \frac{\partial H_{\bm{\theta}}(1, v)}{\partial v} - 1 \big).
\end{align}
$r^{1}_{\bm{\theta}}(u)$ and $r^{2}_{\bm{\theta}}(v)$ stand for requirements on $u$ and $v$, respectively. Both requirements are always positive or zero because $f(x) < 1 \iff \int_0^x f(x) dx < x$. Consequently, both requirements will only equal zero when P3.2 is met (our goal). One advantage of this constraint is that it also considers our model's derivative, which tackles our second challenge above.

We evaluated the constraints on our training set to tackle the third challenge. This leads to the optimization stated below. $r_{\bm{\theta}}(D)$ are the constraints on the training data. From the arguments above, we have that: $r_{\bm{\theta}}(D) \geq 0$ and $r_{\bm{\theta}}(D) = 0$ if and only if the constraints are met.
\begin{align}
\hat{\bm{\theta}} = \arg\,\min_{\bm{\theta}} L_{\bm{\theta}}(D) + r_{\bm{\theta}}(D), \,\,\text{where: }\,\, r_{\bm{\theta}}(D) = \sum_{i=1}^{n}r^{1}_{\bm{\theta}}(u_i) + \sum_{i=1}^{n}r^2_{\bm{\theta}}(v_i). \label{eq:lag}
\end{align}
As in~\cite{Bello2022}, to solve the above problem, we treat $r_{\bm{\theta}}(D)$ as a barrier and optimize as follows:
\begin{enumerate}[topsep=0mm,itemsep=0mm]
    \item Let $\lambda \in (0, 1]$ and $\alpha \in (0, 1)$ be hyper-parameters (e.g., we set $\lambda=1$ and $\alpha=0.95$). Also, let $\bm{\theta}^{(0)}$ be the initial model parameters (initialized as in~\cite{Klambauer2017} -- aka Lecun Normal).
    \item For every training iteration $t \in [1, T]$:
    \begin{enumerate}[topsep=0mm,itemsep=0mm]
    \item $\bm{\theta}^{(t)} = \text{stochatic gradient step} \big(\lambda \, L_{\bm{\theta}^{(t-1)}}(D) + r_{\bm{\theta}^{(t-1)}}(D)\big)$ via ADAM~\cite{Kingma2014}
    \item $\lambda = \alpha\,\lambda$
    \end{enumerate}
    \item Set our model as $\hat{\bm{\theta}} = \bm{\theta}^{(T)}$
\end{enumerate}

As $t \rightarrow \infty$, we have that $\lambda \rightarrow 0$, leading our optimization to focus on the constraints. Even though we may not start our optimization in a feasible solution to $r_{\bm{\theta}^{(t-1)}}(D)$~\cite{Bello2022}, the fact that our loss surface is fully differentiable allows the optimizer to reach such a solution if not stuck on some local minima/saddle. Nevertheless, we align with the {\em empirical evidence} that in overparametrized models, such as deep NNs, local minima are rare (see Chapter 6.7 -- The Optimality of Backpropagation -- of~\cite{Baldi2021}) and that such stochastic optimizers, such as ADAM, are suitable for escaping saddle points~\cite{Baldi2021}. Moreover, our initialization, $\bm{\theta}^{(0)}$, is suitable for satisfying regularizers~\cite{Klambauer2017}. 

For comparison, we shall present results with and without these Lagrangian terms. \end{proof}
\section{Sobolev Losses for 2-Cats Models}
\label{s:training}



We have already discussed how our approach ensures the validity of Copula functions. To train our models, we designed our loss function as a weighted sum, i.e.:
\begin{equation}
L_{\bm{\theta}}(\mathcal{D}) = w^C L^{C}_{\bm{\theta}}(\mathcal{D}) + w^{dC} L^{dC}_{\bm{\theta}}(\mathcal{D}) + w^{c} L^{c}_{\bm{\theta}}(\mathcal{D})
    \label{eq:lossfunction}
\end{equation}
\noindent Here, $w^C$, $w^{dC}$, and $w^c$ are loss weights.

The first component, $L^{C}_{\bm{\theta}}(\mathcal{D})$, stimulates the copula to closely resemble the empirical cumulative distribution function derived from the observed data. In this way, our model learns to capture the essential characteristics of the data. 
The second component, $L^{dC}_{\bm{\theta}}(\mathcal{D})$, imposes penalties on any disparities between the fitted copula's first-order derivatives and the data-based estimates. A Copula's first derivative is essential for sampling (see Appendix~\ref{appn:samp}) and Vine methods. The third component, $L^{c}_{\bm{\theta}}(\mathcal{D})$, focuses on the copula's second-order derivative, which is linked to the probability density function of the data, and evaluates its proximity to the empirical likelihood. Incorporating this aspect in our loss function enhances the model's ability to capture the data's distribution. 

From the arguments above, all three terms play a role in Copula modeling. While obtaining the first component is relatively straightforward, the other two components necessitate some original work. We explain each of them in turn.  Before continuing, we point out that Appendix~\ref{appn} provides an example of how NNs behave when estimating derivatives and integrals of functions. 


Let $F(x_1, x_2) = \Pr\big[ X_1 \leq x_1, X_2 < x_2]$ be the bivariate CDF for some random vector $\mathbf{x}$ and $\widehat{F^n}(\mathbf{x})$ be the empirical cumulative distribution function (ECDF) estimated from a sample of size $n$.
 The Glivenko-Cantelli theorem states that: $\widehat{F^n}(x_1, x_2)\ \xrightarrow{\text{a.s.}}\ F(\mathbf{x})$ as $n \xrightarrow\ \infty$~\cite{Naaman2021}.

We can explore these results to define the relationship between our model, the ECDF, and the true CDF. Being $\hat{\bm{\theta}}$ the estimated parameters and given that the CDF $F(x_1, x_2)$ and the Copula $C(u, v)$ evaluate to the same value (i.e., $u$ and $v$ are the inverse of the marginal CDFs of $x_1$ and $x_2$ respectively):
\begin{align*}
C(u, &v) = F(F_{X_1}^{-1}(u), F_{X_2}^{-1}(v)) = F(x_1, x_2) \approx \widehat{F^n}(x_1, x_2) \approx  H_{\hat{\bm{\theta}}}(u, v) = H_{\hat{\bm{\theta}}}(\widehat{F_{X_1}}(x_1), \widehat{F_{X_2}}(x_2)) .
\end{align*}


Considering these definitions, we can define the loss functions based on the ECDF and the output:
\begin{align}
    L^{C}_{\bm{\theta}}(\mathcal{D}) = \frac{1}{n} \sum_{u_i, v_i \in \mathcal{D}} \big( H_{\bm{\theta}}(u_i, v_i) - \widehat{F^n}(x_{i,1}, x_{i,2}) \big)^2.
\end{align}



Now for our second term, $L^{dC}_{\bm{\theta}}(\mathcal{D})$, a natural step for the first derivatives would be to define a mean squared error (or similar) loss towards these derivatives. The issue is that we need to define empirical estimates from data for both derivatives: $\frac{\partial C(u, v)}{\partial u}$ and $\frac{dC(u, v)}{\partial v}$. To estimate such derivatives, we could have employed conditional density estimators~\cite{Papamakarios2017,Sohn2015,Hall1999}. {\em Nevertheless, methods as~\cite{Papamakarios2017,Sohn2015} are also deep NNs that have the drawback of requiring extra parameters to be estimated. Even classical methods such as~\cite{Hall1999} suffer from this issue.} As a minor contribution, we explore the underlying properties of a Copula to present an {\em empirical approximation} for such derivatives. 

For a 2d Copula, the first derivative of $C$ has the form~\cite{Nelsen2006}:
$\frac{\partial C(u, v)}{\partial u} = \Pr\big[V \leq v \mid U = u \big].$

The issue with estimating this function from data is that we cannot filter our dataset to condition on $U = u$ (we are working with data where $u \in \mathbb{I}$, a real number). However, using Bayes rule we can rewrite this equation. First, let us rewrite the above equation in terms of density functions. That is, if $c_u(v) = F(V \leq v \mid U = u \big) = \Pr\big[V \leq v \mid U = u \big]$ is a cumulative function, $c(v)$ and $c(u)$ are the marginal copula densities (uniform by definition). Using Bayes rule, we have that:
\begin{align}
    \frac{\partial C(u, v)}{\partial u} &= c_u(v) = \frac{c\big(U = u \mid V \leq v \big) F\big( V \leq v \big)}{c\big(u\big)} = v \, c\big(U = u \mid V \leq v \big).
\end{align}
\noindent Where the last identity above comes from the fact that for Copulas, the marginal distributions $U$ and $V$ are uniform, leading to $c(u) = c(v) = 1$. Also, we shall have that $F\big( V \leq v \big) = \Pr\big[V \leq v \big] = v$ and $F\big(U \leq u \big) \Pr\big[U \leq u \big] = u$~\cite{Nelsen2006}. Now, estimate $c\big(U = u \mid V \leq v \big)$.

To do so, let us define $\frac{\widehat{\partial C^n}(u, v)}{\partial u} \approx c\big(U = u \mid V \leq v \big)$ as an empirical estimate of such a density using $n$ data points. We employ the widely used Kernel Density Estimation (KDE) to estimate this function. A fast algorithm for our estimation will work as follows: we arrange our dataset as a table of two columns, where each row contains the pairs of $u, v$ (the columns). For efficiency, we create views of this table sorted on column $u$ or column $v$. When we iterate this table sorted on $v$, finding the points where $V \leq v$ is trivial, as these are simply the previously iterated rows. If we perform KDE on the column $u$ for these points, we shall have an empirical estimate of the density: $c\big(U = u \mid V \leq v \big)$. By simply multiplying this estimate with $v$, we have our empirical estimation of the partial derivative with regards to $u$, that is: $\frac{\widehat{\partial C^n}(u, v)}{\partial u}$. Similarly, we can estimate $\frac{\widehat{\partial C^n}(u, v)}{\partial v}$. Now, our loss term is:
\begin{align}
    L^{dC}_{\bm{\theta}}(\mathcal{D}) = \frac{1}{2n}\sum_{u_i, v_i \in \mathcal{D}}\big(&(\frac{\partial H_{\bm{\theta}}(u_i, v_i)}{\partial u} - \frac{\widehat{\partial C^n}(u_i, v_i)}{\partial u})^2  + (\frac{\partial H_{\bm{\theta}}(u_i, v_i)}{\partial v} - \frac{\widehat{\partial C^n}(u_i, v_i)}{\partial v})^2\big).
\end{align}


Finally, we focus on the last part of our loss, the a pseudo-likelihood:
\begin{align}
L^{c}_{\bm{\theta}}(\mathcal{D}) = -\frac{1}{n} \sum_{u_i, v_i \in \mathcal{D}}  \log(\frac{\partial^2 H_{\bm{\theta}}(u_i, v_i)}{\partial u \partial v}).
\end{align}



It is essential to understand why the three losses are needed. It might seem that simply minimizing this loss would be sufficient. However, we can show that integrals of NN are also approximations up to an asymptotical constant~\cite{Liu2023}, but we still need to ensure that this constant is acceptable (see Appendix~\ref{a:int}). Moreover, Appendix~\ref{appn:ablation} presents an ablation study on the impact of all three losses.





\section{Experimental Results} \label{s:results}

We now present our experimental results on different fronts: (1) validating our Empirical estimates for the first derivative, a crucial step when training 2-Cats; (2) evaluating 2-Cats on synthetic and real datasets without the lagrangian term; and, (3) evaluating the impact of the lagrangian term. 

Before presenting results, we note that our models were implemented using Jax\footnote{\url{https://jax.readthedocs.io/en/latest/}}/Flax\footnote{\url{https://flax.readthedocs.io}}. Given that Jax does not implement CDF methods for Multivariate Normal Distributions, we ported a fast approximation of bivariate CDFs~\cite{Tsay2023} and employed it for our models that rely on Gaussian CDFs/Copulas. Our code is available at {\bf \url{https://anonymous.4open.science/r/2cats-E765/}}. Baseline methods were evaluated using the author-supplied code (details are on Appendix~\ref{appn:code}). 
\subsection{Empirical First Derivative Estimator}

First, we validate our empirical estimations for the first derivative of Copulas. We use the closed-form equations for the first derivatives of Gaussian, Clayton, and Frank Copulas described in ~\cite{Schepsmeier2014}. 


These Copulas have a single parameter, which we varied. Gaussian Copulas ($\rho$) has a correlation parameter, and we set it to: 0.1, 0.5, and 0.9. Clayton and Frank copulas' mixing parameter ($\theta$) was set to 1, 5, and 10. For each of these configurations, we measured the coefficient of determination ($R2$) between empirical estimates and exact formulae. Overall, we found that R2 was above $0.899$ in every setting, validating that our estimates are pretty accurate in practive.

\subsection{2-Cats on Datasets (Without Lagrangian Term)}



We now turn to our main comparisons. Our primary focus, as is the case on baseline methods, will be on capturing the {\bf data likelihood} (Eq~\eqref{ll}). In our experiments, the PDFs for $X_1$ and $X_2$ were estimated via KDE. The bandwidth for these methods is determined using Silverman's rule~\cite{Silverman1986}.


As is commonly done, we evaluate the natural log of this quantity as the data log-likelihood and use its negative to compare 2-Cats with other methods. With this metric of interest in place, we considered two 2-Cats models: (1) 2-Cats-G (Final Layer Gaussian): 2-Cats with a Gaussian bivariate CDF as $G$; and, (2) 2-Cats-L (Final Layer Logistic): 2-Cats with a Logistic bivariate CDF as $G$.

Thus, we considered two forms of $G$. The first was the CDF of a bivariate Gaussian Distribution~\cite{Tsay2023}. The second one is the CDF of the Flexible bivariate Logistic (see Section 11.8 of~\cite{Arnold1992}):
\begin{align*}
    &F(x_1, x_2) = \Pr[X_1 < x_1, X_2 < x_2] = \big((1 + e^{-\alpha \frac{x_1 - \mu_1}{\sigma_1}} + e^{-\alpha \frac{x_2 - \mu_2}{\sigma_2}} + e^{-\alpha (\frac{x_1 - \mu_1}{\sigma_1} + \frac{x_2 - \mu_2}{\sigma_2} ) })^{\frac{1}{\alpha}}\big)^{-1}
\end{align*}
\noindent $\mu_1$, $\mu_2$, $\sigma_1$, $\sigma_2$ and $\alpha$ are free parameters optimized by the model. The same goes for the free parameters of the bivariate Gaussian CDF (the means of each dimension, $\mu_1$ and $\mu_2$, and the correlation parameter $\rho$). Moreover, for each model, we employed four-layer networks with hidden layers having sizes of 128, 64, 32, and 16. We now discuss loss weights.

Recall that the data density is proportional to the pseudo-likelihood. We primarily emphasized the likelihood loss ($w^{c}$). This accentuates our pivotal component. Notably, the scale of its values differs from that of the MSE components, necessitating a proportional adjustment in weight selection. In light of these factors, we fixed $w^{C}=0.01$, $w^{dC}=0.5$, and $w^{c}=0.1$. We note that these hyper-parameters were sufficient to show how our approach is better than the literature. 
Our models were trained using early stopping. Here, for each training epoch, we evaluated the pseudo-log-likelihood {\em on the training set} and kept the weights of the epoch with the best likelihood {\em on the training dataset}. 

 
\begin{table*}[t!]
\centering
\caption{Results for Synthetic Data. Copula parameters are shown on the header. Marginals come from two Normal distributions ($\mu=0, \sigma=1$). $95\%$ confidence intervals (bootstrap) show. Overall, most methods perform statistically the same. {\em This is a positive result; the simulated data comes from well-behaved models. Following the principle of maximum likelihood, it would be unlikely to expect methods to statistically outperform the Parametric (Par) approach}. 
}
\begin{adjustbox}{max width=\textwidth}
\begin{tabular}{llrrrrrrrrr}
\toprule
& & \multicolumn{3}{c}{Gaussian ($\rho$)} & \multicolumn{3}{c}{Clayton ($\theta$)}  & \multicolumn{3}{c}{Frank/Joe ($\theta$)} \\
& &  0.1 & 0.5 & 0.9 & 1 & 5 & 10 & 1 & 5 & 10 \\
\midrule
\multirow{11}{*}{\rotatebox[origin=c]{90}{{\em Non-Deep Learn}}} &
Par & $ 2.91 \pm 0.09 $ & $ 2.69 \pm 0.09 $ & $ 2.09 \pm 0.09 $ & $ 2.65 \pm 0.09 $ & $ 1.89 \pm 0.11 $ & $ 1.39 \pm 0.10 $ & $ 2.78 \pm 0.08 $ & $ 2.57 \pm 0.09 $ & $ 2.10 \pm 0.09 $ \\
&Bern & $ 2.91 \pm 0.09 $ & $ 2.69 \pm 0.09 $ & $ 2.15 \pm 0.09 $ & $ 2.66 \pm 0.08 $ & $ 2.06 \pm 0.09 $ & $ 1.76 \pm 0.08 $ & $ 2.78 \pm 0.08 $ & $ 2.58 \pm 0.09 $ & $ 2.15 \pm 0.08 $ \\
&PBern & $ 2.91 \pm 0.09 $ & $ 2.69 \pm 0.09 $ & $ 2.15 \pm 0.10 $ & $ 2.67 \pm 0.09 $ & $ 2.12 \pm 0.10 $ & $ 1.92 \pm 0.08 $ & $ 2.78 \pm 0.08 $ & $ 2.57 \pm 0.09 $ & $ 2.13 \pm 0.08 $ \\
&PSPL1 & $ 2.91 \pm 0.09 $ & $ 2.69 \pm 0.09 $ & $ 2.15 \pm 0.11 $ & $ 2.67 \pm 0.09 $ & $ 2.14 \pm 0.18 $ & $ 1.61 \pm 0.13 $ & $ 2.78 \pm 0.08 $ & $ 2.57 \pm 0.09 $ & $ 2.10 \pm 0.09 $ \\
&PSPL2 & $ 2.91 \pm 0.09 $ & $ 2.69 \pm 0.09 $ & $ 2.13 \pm 0.10 $ & $ 2.67 \pm 0.09 $ & $ 2.02 \pm 0.11 $ & $ 1.62 \pm 0.10 $ & $ 2.78 \pm 0.08 $ & $ 2.57 \pm 0.09 $ & $ 2.11 \pm 0.09 $ \\
&TTL0 & $ 2.92 \pm 0.09 $ & $ 2.69 \pm 0.09 $ & $ 2.09 \pm 0.09 $ & $ 2.65 \pm 0.08 $ & $ 2.00 \pm 0.21 $ & $ 1.46 \pm 0.09 $ & $ 2.79 \pm 0.08 $ & $ 2.57 \pm 0.09 $ & $ 2.12 \pm 0.09 $ \\
&TLL1 & $ 2.92 \pm 0.09 $ & $ 2.70 \pm 0.10 $ & $ 2.09 \pm 0.09 $ & $ 2.65 \pm 0.09 $ & $ 1.98 \pm 0.21 $ & $ 1.43 \pm 0.10 $ & $ 2.79 \pm 0.08 $ & $ 2.57 \pm 0.09 $ & $ 2.17 \pm 0.20 $ \\
&TLL2 & $ 2.91 \pm 0.09 $ & $ 2.75 \pm 0.20 $ & $ 2.08 \pm 0.09 $ & $ 2.65 \pm 0.09 $ & $ 2.03 \pm 0.25 $ & $ 1.41 \pm 0.10 $ & $ 2.79 \pm 0.09 $ & $ 2.57 \pm 0.09 $ & $ 2.11 \pm 0.09 $ \\
&TLL2nn & $ 2.91 \pm 0.09 $ & $ 2.69 \pm 0.09 $ & $ 2.09 \pm 0.10 $ & $ 2.65 \pm 0.09 $ & $ 1.93 \pm 0.11 $ & $ 1.42 \pm 0.10 $ & $ 2.78 \pm 0.08 $ & $ 2.57 \pm 0.09 $ & $ 2.12 \pm 0.09 $ \\
&MR & $ 2.91 \pm 0.09 $ & $ 2.70 \pm 0.09 $ & $ 2.16 \pm 0.10 $ & $ 2.68 \pm 0.08 $ & $ 2.01 \pm 0.11 $ & $ 1.54 \pm 0.11 $ & $ 2.79 \pm 0.08 $ & $ 2.57 \pm 0.09 $ & $ 2.11 \pm 0.08 $ \\
&Probit & $ 2.91 \pm 0.09 $ & $ 2.69 \pm 0.09 $ & $ 2.11 \pm 0.09 $ & $ 2.66 \pm 0.08 $ & $ 2.05 \pm 0.20 $ & $ 1.50 \pm 0.10 $ & $ 2.78 \pm 0.08 $ & $ 2.57 \pm 0.09 $ & $ 2.12 \pm 0.08 $ \\
\midrule
\multirow{3}{*}{\rotatebox[origin=c]{90}{{\em DL}}}
&NL & $1.46\pm0.08$ & $1.32\pm0.08$ & $ 0.63\pm0.07$ & $1.20\pm0.06$ & $0.47\pm0.09$ & $-0.05\pm0.10$ & $1.39\pm0.07$ & $1.26\pm0.08$ & $0.84\pm0.09$\\
&IGC & $2.92\pm0.10$ & $2.76\pm0.10$ & $2.09\pm0.10$ & $2.64\pm0.08$ & $1.93\pm0.09$ & $1.56\pm0.11$ & $2.87\pm0.09$ & $2.74\pm0.12$ & $2.32\pm0.13$\\
\midrule
\multirow{3}{*}{\rotatebox[origin=c]{90}{{\em Our}}}
&2-Cats-L & $  2.95 \pm 0.09   $ & $  2.79 \pm 0.09   $ & $ 1.91 \pm 0.10 $ & $ 2.67 \pm 0.08 $ & $ 2.01 \pm 0.09 $ & $ 0.97 \pm 0.10 $ & $  2.90 \pm 0.09   $ & $  2.73 \pm 0.10   $ & $ 2.13 \pm 0.10 $ \\
&2-Cats-G & $  3.07 \pm 0.14   $ & $  2.90 \pm 0.15   $ & $ 1.77 \pm 0.10 $ & $  3.04 \pm 0.21   $ & $ 1.55 \pm 0.11 $ & $ 0.96 \pm 0.10 $ & $  3.13 \pm 0.15   $ & $  2.60 \pm 0.12   $ & $ 2.03 \pm 0.13 $ \\
\bottomrule
\end{tabular}
\end{adjustbox}
\label{tab:synth}
\end{table*}

In both our synthetic and real data experiments, we considered the following approaches as {\em Deep Learning} baselines: Deep Archimedian Copulas (ACNET) \cite{Ling2020}; Generative Archimedian Copulas (GEN-AC) \cite{Ng2021}; Neural Likelihoods (NL) \cite{Chilinski2020}; and Implicit Generative Copulas (IGC) \cite{Janke2021}. 
Here, we use the same hyperparameters employed by the authors. 
We also considered several {\em Non-Deep Learning} baselines. These were the Parametric and Non-Parametric Copula Models from~\cite{Nagler2017}. They are referenced as: The best Parametric approach from several known parametric copulas (Par), non-penalized Bernstein estimator (Bern), penalized Bernstein estimator (PBern), penalized linear B-spline estimator (PSPL1), penalized quadratic B-spline estimator (PSPL2), transformation local likelihood kernel estimator of degree $q = 0$ (TTL0),  degree $q = 1$ (TTL1), and $q = 2$ (TTL2), and $q=2$ with nearest neughbors (TTL2nn). We also considered the recent non-parametric approach by Mukhopadhyay and Parzen~\cite{Mukhopadhyay2020} (MR), as well as the Probit transformation-based estimator from~\cite{Geenens2017}. 

To assess the performance of our model on synthetic bivariate datasets, we generated Gaussian copulas with correlations $\rho = 0.1, 0.5, 0.9$, and Clayton and Frank copulas with parameters $\theta = 1, 5, 10$. The marginal distributions were chosen as uncorrelated Normal distributions with parameters ($\mu=0, \sigma=1$). We generated 1,500 training samples and 500 test samples. Results are for test samples. Before continuing, we point out that some deep methods require datasets to be scaled to [0, 1] intervals~\cite{Ling2020,Ng2021}. Indeed, the author-supplied source code does not execute with data outside of this range (this choice was first used~\cite[Section 4.2]{Ling2020} and followed by ~\cite[Section 5.1.1]{Ng2021}). For fair comparisons, in Appendix~\ref{a:norm}, we present results for models with a min-max scaling of the data. 

The results are presented in Table~\ref{tab:synth}, which contains the average negative log-likelihood on the test set for each copula. Results were estimated with a fixed seed of \texttt{30091985}. The table also shows 95\% Bootstrap Confidence Intervals~\cite{Wasserman2004}. Following suit with several guidelines, we choose to report CIs for better interpretability on the expected range of~\cite{Gardner1986-CI,Sterne2001-CI,Altman2013-CI,Ho2019-CI,Dragicevic2016-CI,Greenland2016-CI,Imbens2021-CI}. 
\newcommand{\xmark}{\ding{55}}%

Initially, note that statistical ties (same average or overlapping intervals) are present throughout the table. The exception is for the NL~\cite{Chilinski2020} method, which severely underperforms in this setting, and the methods that did not execute (due to the required data scaling). When reading this table, it is imperative to consider that the simulated data comes from the models evaluated in the Par (first) row.
Thus, parametric methods are naturally expected to overperform. {\em Nevertheless, the statistical ties with such methods show how 2-Cats can capture this behavior even if it is a family-free approach.} 


\begin{table}[t!]
\centering
\caption{Negative log-likelihood on real datasets. $95\%$ confidence intervals (bootstrap) show. \colorbox{SpringGreen}{Green} indicates that 2-Cats was, on average, better than the best baseline (indicated in \colorbox{OrangeRed}{Red}). Underscores highlight statistical ties with the best model.}
\tiny
\begin{tabular}{llrrrr}
\toprule
& & {Boston} & {INTC-MSFT} & {GOOG-FB} \\
\midrule
\multirow{11}{*}{\rotatebox[origin=c]{90}{{\em Non-Deep Learn}}} & Par       & \underline{$-0.16 \pm 0.09$} & $-0.05 \pm 0.07$ & $-0.88 \pm 0.08$ \\
&Bern      & $-0.09 \pm 0.06$ & $-0.03 \pm 0.05$ & $-0.70 \pm 0.04$ \\
&PBern     & \underline{$-0.11 \pm 0.07$} & $-0.02 \pm 0.06$ & $-0.62 \pm 0.04$ \\
&PSPL1     & $-0.06 \pm 0.14$ & $-0.02 \pm 0.06$ & $-0.72 \pm 0.14$ \\
&PSPL2     & $-0.05 \pm 0.14$ & $-0.02 \pm 0.06$ & $-0.83 \pm 0.07$ \\
&TTL0         & \underline{$-0.16 \pm 0.08$} & $-0.06 \pm 0.06$ & $-1.00 \pm 0.07$ \\
&TTL1      & \underline{$-0.18 \pm 0.09$} & $-0.06 \pm 0.06$ & $-1.00 \pm 0.08$ \\
&TTL2      & \underline{$-0.18 \pm 0.09$} & $-0.06 \pm 0.06$ & $-0.71 \pm 0.16$ \\
&TLL2    & \underline{$-0.15 \pm 0.09$} & $-0.04 \pm 0.05$ & $-0.95 \pm 0.13$ \\
&MR        & $-0.07 \pm 0.06$ & $-0.01 \pm 0.05$ & $-0.84 \pm 0.07$ \\
&Probit      & $-0.10 \pm 0.06$ & $-0.03 \pm 0.07$ & $-0.92 \pm 0.06$ \\
\midrule
\multirow{4}{*}{\rotatebox[origin=c]{90}{{\em Deep Learn}}} & ACNet          & \underline{$-0.28 \pm 0.11$} & \cellcolor{OrangeRed}\underline{$-0.18 \pm 0.08$} & $-0.91 \pm 0.12$ \\
&GEN-AC         & \cellcolor{OrangeRed}\underline{$-0.29 \pm 0.11$} & \underline{$-0.17 \pm 0.07$} & $-0.75 \pm 0.10$ \\
&NL             & \underline{$-0.28 \pm 0.16$} & \underline{$-0.16 \pm 0.08$} & \cellcolor{OrangeRed}$-1.09 \pm 0.13$ \\
&IGC            & $0.07  \pm 0.08$ & $0.14  \pm 0.04$ & $-0.30 \pm 0.02$ \\
\midrule
\multirow{2}{*}{\rotatebox[origin=c]{90}{{\em Our}}} & 2-Cats-L      & \cellcolor{SpringGreen}\underline{$-0.30 \pm 0.09$} & \cellcolor{SpringGreen}\underline{$-0.21 \pm 0.05$} & \cellcolor{SpringGreen}\underline{$-1.75 \pm 0.04$} \\
& 2-Cats-G      & \cellcolor{SpringGreen}\underline{$-0.27 \pm 0.07$} & $-0.07 \pm 0.06$ & \cellcolor{SpringGreen}\underline{$-1.65  \pm 0.08$} \\
\bottomrule
\end{tabular}
\label{tab:real}
\end{table}

We now turn to real datasets. As was done in previous work, we employ the Boston housing dataset, the Intel-Microsoft (INTC-MSFT), and Google-Facebook (GOOG-FB) stock ratios. These pairs are commonly used in Copula research; in particular, they are the same datasets used by~\cite{Ling2020,Ng2021}. We employed the same train/test and pre-processing as previous work for fair comparisons. Thus, the data is scaled with the same code as the literature, and all deep methods are executed. 

The results are presented in Table~\ref{tab:real}. The winning 2-Cats models are highlighted in the table in \colorbox{SpringGreen}{Green}. The best, on average, baseline methods are in \colorbox{OrangeRed}{Red}. We consider a method better or worse than another when their confidence intervals do not overlap; thus, ties with the best model are shown in underscores. Overall, the table highlights the superior performance of the 2-Cats models across all datasets. Only in one setting, 2-Cats-G on INTC-MSOFT, did the model underperform.

\subsection{On the Lagrangian Term for P3}

So far, 2-Cats presents itself as a promising approach for Copula modeling. Nevertheless, to sample from Copulas, we require that the marginals of the model come from a uniform distribution. This is our primary for the Lagrangian terms used to meet P3 (see Section~\ref{sec:propsandproofs}).

We trained 2-Cats models on the three real-world datasets with and without the Lagrangian optimization of Eq~\eqref{eq:lag}. Due to space constraints, we present results for 2-Cats-L only. Here, models were trained for 1000 iterations. No early stopping was performed as our focus was on meeting P3. 

We compare both the average absolute deviations (i.e., $abs_u = \frac{1}{n}\sum_i|H_{\bm{\theta}}(u_i, 1) - u_i|$ and $abs_v = \frac{1}{n}\sum_i|H_{\bm{\theta}}(1, v_i) - v_i|$) as well as the average relative deviations (i.e., $rel_u = 100\,\frac{1}{n}\sum_i|H_{\bm{\theta}}(u_i, 1) - u_i|/u_i$ and $rel_v = 100\,\frac{1}{n}\sum_i|H_{\bm{\theta}}(1, v_i) - v_i|/v_i$) for models trained with and without constraints.

For the GOOG-FB dataset, significant gains were achieved with the constraints, i.e.: $abs_u$ went from $0.17$ to $0.09$ and $abs_v$ improved from $0.16$ to $0.03$. $rel_u$ improved from $57\%$ to $35\%$ and $rel_v$ went from $54\%$ to $14\%$. The negative log-likelihood reduced slightly when Lagrangian was used (from -1.70 to -1.14). Nevertheless, the model is still better than the baselines (see Table~\ref{tab:real}). Results for the negative log-likelihood do not exactly match the previous ones, as no early stopping was done here. Even though relative errors may appear large, note that such scores are severely sensitive to the tail of the distributions. Here, even a small deviation (e.g., 0.001 to 0.0015) incurs a large relative increase.

For INTC-MSOFT, results were similar across $u$ and $v$ dimensions. As in both versions, no significant gains were observed in $abs_u$ nor $abs_v$; these scores are pretty small (below $0.009$) in both cases. In relative terms, an increase was observed in $abs_v$ from roughly 4\% to 2\%, with $abs_u$ being 4\% in both cases. No significant changes were observed in the Boston dataset where errors were already minor $abs_u$ and $abs_v$ below $0.02$ in both instances, with relative errors ranging from 2\% to 4\% regardless of the constraints being employed. This shows that 2-Cats may already approximate the Uniform marginals even without constraints.

These results show the efficacy of our Lagrangian penalty. In some settings, such as GOOG-GB, constraints may present significant improvements. Using the Lagrangian term will depend on whether simulation is essential to the end-user (see Appendix~\ref{appn:samp}). 

\section{Conclusions} \label{s:conc}

In this paper, we presented Copula Approximating Transform models (2-Cats). 
Different from the literature when training our models; we focus not only on capturing the pseudo-likelihood of Copulas but also on meeting or approximating several other properties, such as the partial derivatives of the $C$ function. Moreover, a second major innovation is proposing Sobolev training for Copulas. 
Overall, our results show the superiority of 2-Cats on real datasets. 

A natural follow-up for 2-Cats is on using the Pair Copula Construction (PCC)~\cite{Aas2009,Czado2010} to port Vines~\cite{Czado2022,Low2018,Genest1995,Genest2007} to incorporate our method. PCC is a well-established algorithm to go from a 2D Copula to an ND One using Vines. 
We also believe that evaluating other non-negative NNs for $m_{\bm{\theta}}$ is a promising direction~\cite{Marteau2020,Rudi2021,Tsuchida2023,Sladek2023,Loconte2023,Kortvelesy2023}.


\bibliography{main}
\pagebreak
\appendix
\section{On Derivatives and Integrals of NNs} \label{appn}

In this appendix, we present an example of the issue of approximating derivatives/integrals of NNs. 

\subsection{An Example on why approximating derivatives and integrals fail}

We begin with a pictorial example of the issue. Before doing so, we take some time to revise the universal approximation properties of NNs~\cite{Cybenko1989,Hornik1989,Lu2020}. We also shift our focus from Copulas for this argument. It is expected to state that the universal approximation theorem (UAT) guarantees that functions are approximated with NNs. However, given that when learning models from training data, it would be more correct to say that function {\it evaluations at training points} are approximated.

\begin{figure}
    \centering
    \includegraphics[width=0.95\linewidth]{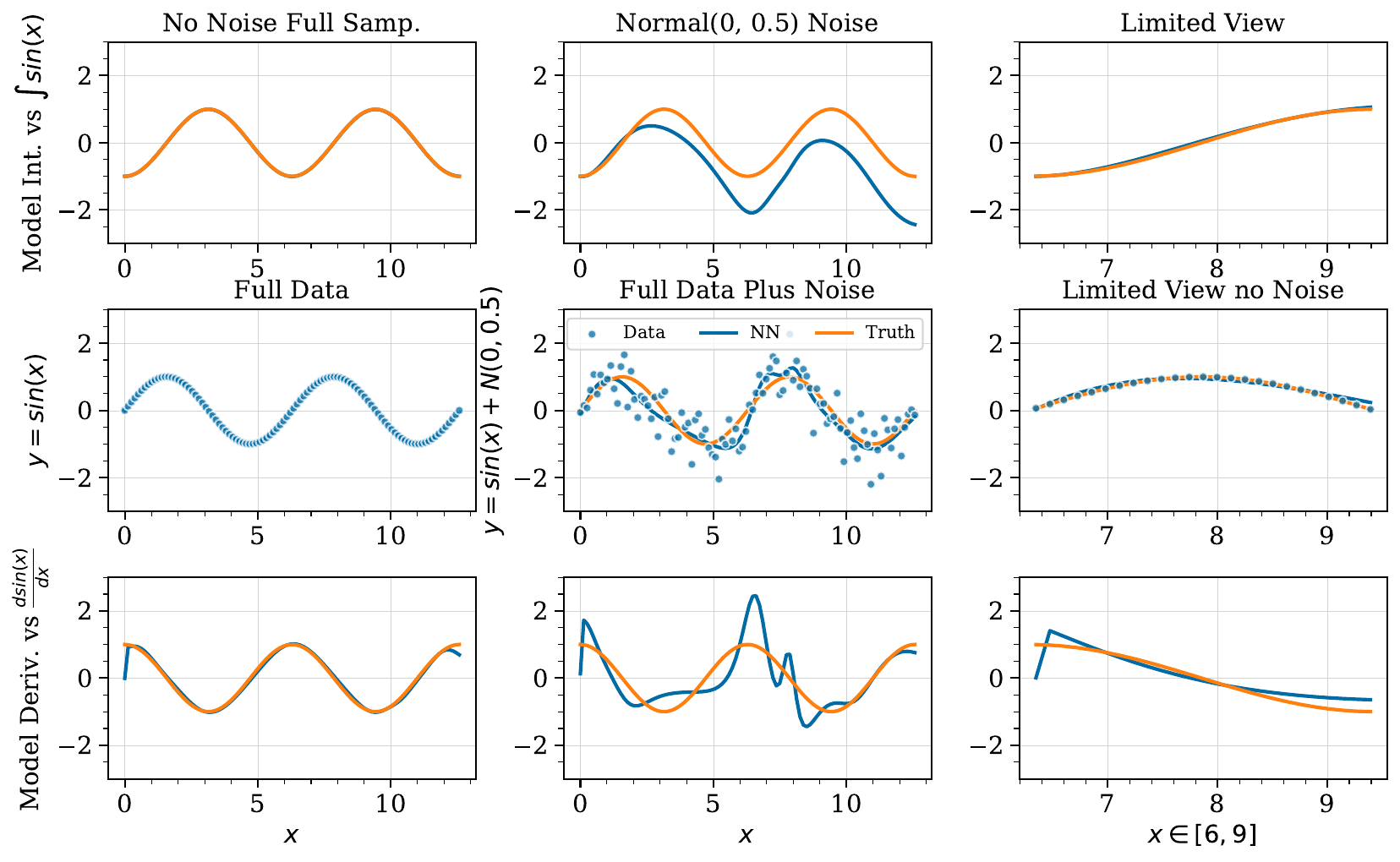}
    \caption{Models vs Derivatives. Here, we compare sample data from the simple $y = sin(x)$ process. Data is sampled in three settings, a ``full'' dataset with a larger range (left), a noisy dataset (middle), and a limited view (right). NN models (lines) approximate the data points in every case. However, the derivatives/integrals of the model do not approximate function derivatives (last row).} 
    \label{fig:deriv}
\end{figure}

Consider a simple 4-Layer Relu NN with layers of 128 neurons. This will be our model. To avoid overfitting, this model will be optimized with a validation set. The problem with relying only on the UAT is shown in Figure~\ref{fig:deriv}. The figure shows a ground truth data-generating process of $y = \sin(x) + \eta$, where $\eta$ is some noise term. On the first column, when $\eta = 0$, the model will approximate the data generating function (middle row), its derivative (bottom row), and integral (top row). When $\eta \sim N(0, 0.5)$, the model {\em will} approximate the data points, but {\em not necessarily} the derivatives. 

This example may seem to contradict~\cite{Czarnecki2017,Hornik1989} others that looked into the universal approximation properties of NN regarding derivative. We point out that this is {\em not} the case. In Appendix~\ref{appn:deriv}, we show how the variance will increase when estimating the derivative, leading to the figure's oscillating behavior. From the figure, also note that integrals are approximated up to an asymptotic constant (see Appendix~\ref{a:int}, and ~\cite{Liu2023}); however, we still need to control for this constant (note the growing deviation on the top plot of the middle row).

We now present some results on the Integral (exemplified on the top row of Figure~\ref{fig:deriv}) and Derivative (bottom row) that tie in with our example. Furthermore, we briefly discuss the last column of the figure. Notice that on the extreme left and extreme right, the derivative of the network begins to diverge from the underlying process. The NN was not trained with points sampled after these boundaries. Under data shifts (e.g., data from a similar process not seen on the training set), the issues we point out throughout this Appendix should increase.

\subsection{On the Integral of Neural Networks} \label{a:int}

To present a view of the theoretical bounds on the error of approximation of integrals, let $\hat{f}_{\bm{\theta}}(x)$ be some NN parametrized by $\bm{\theta}$. Also, let $f(x)$ represent our training dataset. We also make the reasonable assumption our dataset comes from some real function, $f^{\ast}(x)$, sampled under a symmetric additive noise, i.e., $f(x) = f^{\ast}(x) + \eta$, with an expected value equal to zero: $\mathbb{E}[\eta] = 0$. Gaussian noise is such a case.

If a NN is a universal approximator~\cite{Cybenko1989,Hornik1989,Lu2021,Cai2023,Lu2017} and it is trained on the dataset $f(x)$, we reach:
$$\lVert \hat{f}_{\bm{\theta}}(x) - f(x) \rVert_p < \epsilon_1$$
\noindent Most UAT proofs are valid for any $p$ norm ($p$ norms bound one another), and assuming the 1 norm, we reach: $\hat{f}_{\bm{\theta}}(x) = f(x) \pm \epsilon$ for some constant $\epsilon$. As a consequence, we reach:
$$\hat{f}_{\bm{\theta}}(x) = f^{\ast}(x) + \eta \pm \epsilon_1$$

Given that integration is a linear operator, we can show that:
$$E_{\eta}[\int \hat{f}_{\bm{\theta}}(x)] = \int f^{\ast}(x) \pm \epsilon_2.$$
To do so, we shall use the law of the unconscious statistician as follows:

\begin{align*}
g(x) &= \int \hat{f}{_{\theta}}(x) \,dx\\
E_{\eta}[g(x)] &= E_{\eta}[\int \hat{f}_\theta(x) dx] \\
&= \int \int \hat{f}_\theta(x) p_{\eta}(n) \, dx\, dn \\ 
&= \int \int p_{\eta}(n)\hat{f}_\theta(x) \, dn\, dx \\
&= \int \int p_{\eta}(n)\big( f^{\ast}(x) + \eta \pm \epsilon_1 \big) \, dn\, dx \\
&= \int \int p_{\eta}(n) f^{\ast}(x) + p_{\eta}(n) \eta \pm p_{\eta}(n) \epsilon_1 \, dn\, dx \\
&= \int E_\eta[f^{\ast}(x)] + E_\eta[\eta] \pm E_\eta[\epsilon_1] dx \\&= \int E_\eta[f^{\ast}(x)] \pm E_\eta[\epsilon_1] dx \\&= \int f^{\ast}(x) \pm \epsilon_1 dx \\&= \int f^{\ast}(x) dx \pm \int \epsilon_1 dx = \int f^{\ast}(x) dx \pm \epsilon_2 dx
\end{align*}
\noindent where $\epsilon_2$ is a constant as $\epsilon_1$ is also a constant. However, this constant will grow depending on the integrating interval. This growing error is visible on the plot on the center top of Figure~\ref{fig:deriv}.
Controlling for $\epsilon_2$ will depend on the dataset's quality and the training procedure. 

A similar result is discussed in~\cite{Liu2023}.

\subsection{On the Derivative of Underlying Process and its impact on  Neural Networks} \label{appn:deriv}

To understand the oscillations on the derivatives of the example, let us rewrite the underlying process that generated the data as a Gaussian process. That is: $Y(x) = sin(x + \phi) + \eta$. $Y$ is the stochastic process and $\phi \sim Uniform(-2\pi, 2\pi)$ is some phase component necessary for stationarity. The variance of this process will be given by the kernel function $K(\tau)$, which is now controlled by parameter $\tau$. Given that this process has the same variance over time, $x$, it is stationary, and a stationary Gaussian process has $K(\tau) = e^{-\frac{\tau}{2}}$~\cite{Lindgren2013}. Before continuing, we note that in our example, we sampled from a Gaussian with a standard deviation of $\sigma=0.5$ and variance of $\sigma^2=0.25$. This leads to a value of $\tau$ that is of $\tau = \approx 2.8$ since $e^{-\frac{2.8}{2}} \approx 0.25$.

Now, let $Y'(x)$ be the derivative of this process. This may be defined as:
\begin{align*}
\frac{Y(x+h) - Y(x)}{h} &\rightarrow Y'(x)
\end{align*}
if:
\begin{align*}
E\big[(\frac{Y(x+h) - Y(x)}{h} - Y'(x))^2 \big] &\rightarrow 0
\end{align*}
when: $h \rightarrow 0$.

Given that our process is stationary and has the expected value of the noise term as zero, we can estimate the value of the derivative process as zero, i.e., $E[Y'(x)] = 0$~\cite{Lindgren2013}. We also have that the variance is equal to the second derivative of $K(\tau)$: $Var[Y'(x)] = \frac{\partial K}{\partial \tau \partial \tau}$~\cite{Lindgren2013}.

From here, we can derive $K$ twice to show that: $Var[Y'(x)] = -e^{-\frac{\tau^2}{2}} + \tau^2 e^{-\frac{\tau^2}{2}}$. Recall that $Var[Y(x)] = \sigma^2 = 0.25$ and $\tau \approx 2.8$. Now, by solving a simple inequality, we can show that the variance of the derivative process, $Var[Y'(x)]$, is greater than the variance of the original process $Var[Y(x)]$, i.e., $Var[Y'(x)] > Var[Y(x)]$ when $\tau > \sqrt{2} \approx 1.4$ (as is our example where $\tau \approx 2.8$).

This result explains the oscillation in the bottom middle plot of Figure~\ref{fig:deriv}. The derivative process has a higher variation; thus, we expect a higher variance in estimations of the derivative.

Overall, even if an NN is a universal approximation of derivatives~\cite{Czarnecki2017,Hornik1989}, the variance of the noise term will likely increase (here we began with a variance of 0.25 and it is already sufficient to show such an increase)  even for the derivative of the NN.

\section{Sampling from 2-Cats} \label{appn:samp}

We now detail how to sample from 2-Cats. Let the first derivatives of 2-Cats be: $h_v(u) = \Pr[U \leq u \mid V = v] = \frac{\partial H_{\bm{\theta}}(u, v)}{\partial v}$, and $h_u(v) = \Pr[V \leq v \mid U = u] = \frac{\partial H_{\bm{\theta}}(u, v)}{\partial u}$. Now, let $h'_v(u) = \frac{\partial^2 H_{\bm{\theta}}(u, v)}{\partial v^2}$ be the first derivative of $h_v(u)$, and similarly $h'_u(v) = \frac{\partial^2 H_{\bm{\theta}}(u, v)}{\partial u^2}$. These derivatives are readily available in the Hessian matrix that we estimate symbolically for 2-Cats (see Section~\ref{sec:propsandproofs}).

Now, let us determine the inverse of $h_u(v)$, that is: $h^{-1}_u(v_{i})$ (the subscript stands for inverse). The same arguments are valid for $h_v(u)$ and $h^{-1}_v(u_{i})$, and thus we omit them. Notice that with this inverse, we can sample from the CDF defined by $h_u(v)$ using the well-known Inverse transform sampling. Now, notice that by definition, $h_u(v)$ is already the derivative of $H_{\bm{\theta}}(u, v)$ with regards to $u$.

Using a Legendre transform\footnote{\url{https://en.wikipedia.org/wiki/Inverse_function_rule}}, the inverse of a derivative is:
$$ h^{-1}_u(v_{i}) = \big(  h'_u(v_{i}) \big)^{-1} = \big(  \frac{\partial^2 H_{\bm{\theta}}(u, v_{i})}{\partial u^2} \big)^{-1} $$
\noindent Where, again, the derivative inside the parenthesis is readily available to us via symbolic computation.

With such results in hand, the algorithm for sampling is:
\begin{enumerate}
    \item Generate two independent values $u \sim Uniform(0, 1)$ and $v_{i} \sim Uniform(0, 1)$.
    \item Set $v = h^{-1}_u(v_{i})$. This is an Inverse transform sampling for $v$
    \item Now we have the $(u, v)$ pair. We can again use Inverse transform sampling to determine:
    \begin{enumerate}
        \item $x_1 = F^{-1}_{x_1}(u)$
        \item $x_2 = F^{-1}_{x_2}(v)$
    \end{enumerate}
\end{enumerate}

\section{Ablation Study} \label{appn:ablation}

We now provide an ablation study to understand the impact of our three losses. In this study, our model was trained without the Lagrangian terms of Property P3. 

\begin{table}[t!]
\centering
\caption{Ablation Study}
\begin{tabular}{llll}
{\bf BOSTON} \\
\toprule
 & $L^C$ & $L^{dC}$ & $L^{c}$ \\
\midrule
Only $L^c$ for training & 0.115 & 0.033 & -0.448 \\
$L^c$ and $L^{dC}$ for training & {\bf 0.107} & 0.035 & -0.454 \\
$L^c$ and $L^{C}$ for training & 0.103 & 0.023 & -0.236 \\
All Three Losses (paper) & {\bf 0.107} & {\bf 0.020} & {\bf -0.630} \\
\bottomrule
\end{tabular}
\vspace{1em}
\begin{tabular}{llll}
\,\\
{\bf INCT-MSOFT} \\
\toprule
 & $L^C$ & $L^{dC}$ & $L^{c}$ \\
\midrule
Only $L^c$ for training & 0.131 & 0.007 & -0.327 \\
$L^c$ and $L^{dC}$ for training & 0.141 & 0.004 & -0.314 \\
$L^c$ and $L^{C}$ for training & 0.137 & 0.007 & -0.320 \\
All Three Losses (paper) & 0.141 & 0.018 & {\bf -0.402} \\
\bottomrule
\end{tabular}
\vspace{1em}
\begin{tabular}{llll}
{\bf GOOG-FB} \\
\toprule
 &  $L^C$ & $L^{dC}$ & $L^{c}$ \\
\midrule
Only $L^c$ for training & 0.166 & {\em 6.88} & -3.324 \\
$L^c$ and $L^{dC}$ for training & 0.133 & 0.066 & -2.163 \\
$L^c$ and $L^{C}$ for training & 0.195 & {\em 2.50} & -3.201 \\
All Three Losses (paper) & {\bf 0.136} & 0.087 & -2.881 \\
\bottomrule
\end{tabular}
\label{t:ablation}
\end{table}

The three tables (see Table~\ref{t:ablation}) of this section below present the values for $L^C$ (the squared error of the cumulative function C), $L^{dC}$ (the squared error for the first derivatives of C), and $L^c$ (the copula likelihood). We present one table for each real-world dataset.

$L^c$ is the score of interest when comparing models and the one we report in our paper. However, as discussed in Appendix~\ref{appn} of our manuscript, when an NN approximates one aspect of a function (here being the Copula density $L^c$), the NN may miss other elements, such as the integral and derivatives of the function. This is the reason why we now present results for all three metrics. The metrics are in the columns of the tables. Each row presents a different 2-Cats approach using: (1) only $L^c$ for training (the metric of most interest); (2) $L^c$ and $L^{dC}$; $L^c$ and $L^{C}$; and, (3) all three losses (as in the main text).

From the tables, we can see that every loss plays a role when training the model. When using all three losses, gains/ties are achieved in 5 out of 9 cases ({\bf bold}). When ignoring such terms, large losses ({\em italic}) will also arise

\section{Conjecture: A Mixture of 2-Cats is a Universal Copula Approximator} \label{appn:uat}

{\em With sufficient components}, any other CDF (or density function) may be approximated by a mixture of other CDFs~\cite{Nguyen2020}. As a consequence, a mixture of 2-Cats of the form:
$$H_{\bm{m}, \bm{\Theta}}(u, v) = \sum_{i=1}^k w_i H_{i, \bm{\theta_i}}(u, v),$$
\noindent where $H_{i, \bm{\theta_i}}(u, v)$ is the $i$-th 2-Cats model. $\bm{\Theta}$ is the parameter vector comprised of concatenating the individual parameters of each mixture component: $\bm{\Theta} = [\bm{\theta_1}, \bm{\theta_2}, \dots, \bm{\theta_k}]$, and $\bm{m} = [m_1, m_2, \dots m_k]$ are real numbers related to the mixture weights.

 This model may be trained as follows:
 \begin{enumerate}
    \item Map mixture parameters to the $k-1$ Simplex: $\mathbf{w} = \text{softmax}(\bm{\theta}_w)$. $w_i$ if the $i$-th position of this vector.
    \item When training, backpropagate to learn  $\bm{\Theta}$ and $\bm{m}$. 
\end{enumerate}

This is similar to the Mixture Density NN~\cite{Bishop1994}. Over the last few years, mixture of Copulas approaches has been gaining traction in several fields~\cite{Abraj2022,Sohrabian2021,Zhang2017}; here, we are proposing an NN variation. By definition, this is a valid Copula.

\section{Scaled Synthetic Data} \label{a:norm}

\begin{table*}[t!]
\centering
\caption{Results for Synthetic Data. Copula parameters are shown on the header. Marginals come from two Normal distributions ($\mu=0, \sigma=1$). $95\%$ confidence intervals (bootstrap) show. \colorbox{SpringGreen}{Green} indicates that a 2-Cats model was, on average, better than the best baseline (indicated by \colorbox{OrangeRed}{Red}). Underscores highlight the best statistical ties with the best model. {\bf We point out that training and test data are independently normalized to the [0, 1] range. This is a necessity of some deep models such as~\cite{Ling2020,Ng2021}} that do not execute on data outside this range (this choice was first used~\cite[Section 4.2]{Ling2020} and followed by ~\cite[Section 5.1.1]{Ng2021}).}
\begin{adjustbox}{max width=\textwidth}
\begin{tabular}{llrrrrrrrrr}
\toprule
&  & \multicolumn{3}{c}{Gaussian ($\rho$)} & \multicolumn{3}{c}{Clayton ($\theta$)}  & \multicolumn{3}{c}{Frank/Joe ($\theta$)} \\
&  & 0.1 & 0.5 & 0.9 & 1 & 5 & 10 & 1 & 5 & 10 \\
\midrule
\multirow{11}{*}{\rotatebox[origin=c]{90}{{\em Non-Deep Learn}}} & Par & \underline{$-0.55 \pm 0.12$} & \underline{$-0.55 \pm 0.13$} & $-0.55 \pm 0.16$ & \cellcolor{OrangeRed}\underline{$-1.12 \pm 0.09$} & $-1.12 \pm 0.16$ & $-1.12 \pm 0.21$ & \cellcolor{OrangeRed}\underline{$-0.86 \pm 0.09$} & \cellcolor{OrangeRed}$-0.86 \pm 0.12$ & \cellcolor{OrangeRed}$-0.86 \pm 0.15$ \\
&Bern & \underline{$-0.54 \pm 0.12$} & \underline{$-0.54 \pm 0.13$} & $-0.54 \pm 0.14$ & \underline{$-1.10 \pm 0.08$} & \underline{$-1.10 \pm 0.12$} & $-1.10 \pm 0.18$ & \underline{$-0.85 \pm 0.09$} & $-0.85 \pm 0.12$ & $-0.85 \pm 0.14$ \\
&PBern & \underline{$-0.55 \pm 0.11$} & \underline{$-0.55 \pm 0.14$} & $-0.55 \pm 0.14$ & \underline{$-1.11 \pm 0.08$} & \underline{$-1.11 \pm 0.12$} & $-1.11 \pm 0.18$ & \cellcolor{OrangeRed}\underline{$-0.86 \pm 0.09$} & \cellcolor{OrangeRed}$-0.86 \pm 0.13$ & \cellcolor{OrangeRed}$-0.86 \pm 0.15$ \\
&PSPL1 & \underline{$-0.55 \pm 0.11$} & \underline{$-0.55 \pm 0.13$} & $-0.55 \pm 0.14$ & \underline{$-1.11 \pm 0.07$} & \underline{$-1.11 \pm 0.15$} & $-1.11 \pm 0.30$ & \cellcolor{OrangeRed}\underline{$-0.86 \pm 0.09$} & \cellcolor{OrangeRed}$-0.86 \pm 0.12$ & \cellcolor{OrangeRed}$-0.86 \pm 0.18$ \\
&PSPL2 & \underline{$-0.55 \pm 0.11$} & \underline{$-0.55 \pm 0.13$} & $-0.55 \pm 0.15$ & \underline{$-1.10 \pm 0.08$} & \underline{$-1.10 \pm 0.14$} & $-1.10 \pm 0.24$ & \cellcolor{OrangeRed}\underline{$-0.86 \pm 0.09$} & \cellcolor{OrangeRed}$-0.86 \pm 0.12$ & \cellcolor{OrangeRed}$-0.86 \pm 0.17$ \\
&TTL0 & \underline{$-0.54 \pm 0.12$} & \underline{$-0.54 \pm 0.14$} & $-0.54 \pm 0.15$ & \underline{$-1.11 \pm 0.09$} & \underline{$-1.11 \pm 0.16$} & $-1.11 \pm 0.27$ & \cellcolor{OrangeRed}\underline{$-0.86 \pm 0.08$} & \cellcolor{OrangeRed}$-0.86 \pm 0.13$ & \cellcolor{OrangeRed}$-0.86 \pm 0.16$ \\
&TLL1 & \underline{$-0.53 \pm 0.12$} & \underline{$-0.53 \pm 0.14$} & $-0.53 \pm 0.16$ & \cellcolor{OrangeRed}\underline{$-1.12 \pm 0.08$} & $-1.12 \pm 0.17$ & $-1.12 \pm 0.32$ & \underline{$-0.85 \pm 0.09$} & $-0.85 \pm 0.12$ & $-0.85 \pm 0.17$ \\
&TLL2 & \underline{$-0.54 \pm 0.12$} & \underline{$-0.54 \pm 0.14$} & $-0.54 \pm 0.15$ & \underline{$-1.12 \pm 0.09$} & \underline{$-1.12 \pm 0.19$} & $-1.12 \pm 0.40$ & \cellcolor{OrangeRed} \cellcolor{OrangeRed}\underline{$-0.86 \pm 0.09$}& \cellcolor{OrangeRed}$-0.86 \pm 0.13$ & \cellcolor{OrangeRed}$-0.86 \pm 0.16$ \\
&TLL2nn & \underline{$-0.55 \pm 0.12$} & \underline{$-0.55 \pm 0.14$} & $-0.55 \pm 0.15$ & \cellcolor{OrangeRed}\underline{$-1.12 \pm 0.08$} & $-1.12 \pm 0.17$ & $-1.12 \pm 0.40$ & \cellcolor{OrangeRed}\underline{$-0.86 \pm 0.09$} & \cellcolor{OrangeRed}$-0.86 \pm 0.13$ & \cellcolor{OrangeRed}$-0.86 \pm 0.15$ \\
&MR & \underline{$-0.54 \pm 0.12$} & \underline{$-0.54 \pm 0.13$} & $-0.54 \pm 0.14$ & \underline{$-1.08 \pm 0.08$} & \underline{$-1.08 \pm 0.14$} & $-1.08 \pm 0.34$ & \underline{$-0.84 \pm 0.09$} & $-0.84 \pm 0.12$ & $-0.84 \pm 0.15$ \\
&Probit & \underline{$-0.55 \pm 0.12$} & \underline{$-0.55 \pm 0.13$} & $-0.55 \pm 0.15$ & \underline{$-1.10 \pm 0.08$} & \underline{$-1.10 \pm 0.14$} & $-1.10 \pm 0.29$ & \underline{$-0.85 \pm 0.09$} & $-0.85 \pm 0.12$ & $-0.85 \pm 0.16$ \\
\midrule
\multirow{3}{*}{\rotatebox[origin=c]{90}{{\em Deep Learn}}} &ACNet          & $-0.06 \pm 0.09$ & $-0.29 \pm 0.09$ & $-1.05 \pm 0.08$ & $-0.51 \pm 0.07$ & $-0.91 \pm 0.11$ & \cellcolor{OrangeRed}$-1.35 \pm 0.04$ & $-0.13 \pm 0.09$ & $-0.49 \pm 0.08$ & $-0.58 \pm 0.10$ \\
&GEN-AC         & $-0.06 \pm 0.08$ & $-0.29 \pm 0.08$ & $-1.06 \pm 0.07$ & $-0.52 \pm 0.06$ & $-0.93 \pm 0.12$ & $-1.34 \pm 0.05$ & $-0.15 \pm 0.07$ & $-0.49 \pm 0.08$ & $-0.56 \pm 0.10$ \\
&NL             & $-0.34 \pm 0.09$ & $-0.43 \pm 0.07$ & \cellcolor{OrangeRed}\underline{$-1.07 \pm 0.07$} & $-0.64 \pm 0.07$ & \cellcolor{OrangeRed}$-1.17 \pm 0.13$ & $-1.03 \pm 0.07$ & $-0.42 \pm 0.07$ & $-0.59 \pm 0.08$ & $-0.57 \pm 0.06$ \\
&IGC            & \cellcolor{OrangeRed}\underline{$-0.58 \pm 0.07$} & \cellcolor{OrangeRed}\underline{$-0.67 \pm 0.06$} & $-0.86 \pm 0.06$ & $-0.66 \pm 0.06$ & $-0.88 \pm 0.06$ & $-0.83 \pm 0.06$ & $-0.66 \pm 0.06$ & $-0.73 \pm 0.05$ & $-0.75 \pm 0.07$ \\
\midrule
\multirow{2}{*}{\rotatebox[origin=c]{90}{{\em Our}}} &2-Cats-L      & \underline{$-0.48 \pm 0.14$} & \cellcolor{SpringGreen}\underline{$-0.63 \pm 0.13$} & \cellcolor{SpringGreen}\underline{$-1.20 \pm 0.23$} & $-0.64 \pm 0.14$ & \underline{$-1.24 \pm 0.20$} & \cellcolor{SpringGreen}$-1.69 \pm 0.19$ & $-0.52 \pm 0.17$ & \cellcolor{SpringGreen}\underline{$-1.34 \pm 0.17$} & \cellcolor{SpringGreen}\underline{$-1.34 \pm 0.14$} \\
&2-Cats-G      & \underline{$-0.38  \pm 0.15$} & \underline{$-0.45 \pm 0.19$} & \underline{$-1.04 \pm 0.28$} & $-0.44 \pm 0.19$ & \cellcolor{SpringGreen}\underline{$-1.27 \pm 0.30$} & $-1.06 \pm 0.18$ & $-0.51 \pm 0.15$ & \underline{$-1.12 \pm 0.09$} & \underline{$-1.06 \pm 0.18$} \\





\bottomrule
\end{tabular}
\label{t:zonesynth}
\end{adjustbox}
\end{table*}

Table~\ref{t:zonesynth} presents results without scaling the input data as is done in Deep Learning methods. The table presents the average negative log-likelihood and the 95\% confidence interval. 

The colors and highlights on this table match the main text. The winning 2-Cats models are highlighted in the table in \colorbox{SpringGreen}{Green}. The best, on average, baseline methods are in \colorbox{OrangeRed}{Red}. The table shows that 2-Cats is better than baseline methods when the dependency ($\rho$ or $\theta$) increases. For small dependencies, methods not based on Deep Learning outperform Deep ones (including 2-Cats). Nevertheless, 2-Cats is the winning method in 6 out of 9 datasets and is tied with the best in one case.



\section{A Flexible Variation of the Model} \label{appn:var}

A Flexible 2-Cats model works similarly to our 2-Cats. However, the transforms are different:
\begin{enumerate}
    \item Let $m_{\bm{\theta}}: \mathbb{I}^2 \mapsto \mathbb{R}+$ be an MLP outputs positive numbers. We achieve this by employing Elu plus one activation in every layer as in~\cite{Wehenkel2019}.
    \item Define the transforms: \begin{align*}t_v(u) = \int_{0}^{u} m_{\bm{\theta}}(x, v)\, dx; \,\,
t_u(v) = \int_{0}^{v} m_{\bm{\theta}}(y, u) \,dy.\end{align*}
    \item The 2-Cats-FLEX hypothesis is now defined as the function: $H_{\bm{\theta}}(u, v) = G\big((t_v(u), t_u(v)\big)$, where $G(x, y)$ is any bivariate CDF on the $\mathbb{R}^2$ domain (e.g., the Bivariate Logistic or Bivariate Gaussian).
    
\end{enumerate}


This model meets {\bf P2}, but not {\bf P1} nor {\bf P3}. As already stated in the introduction, the fact that $t_v(u)$ and $t_u(v)$ are monotonic, one-to-one, guarantees that $G(t_v(u), t_u(v))$ defines a {\em valid} probability transform to a new CDF. A major issue with this approach is that we have no guarantee that the model's derivatives are conditional cumulative functions (first derivatives) or density functions (second derivatives). That's why we call it Flexible (FLEX).

\subsection{Full Results}
 
\begin{table}[ttt!]
\centering
\caption{Negative log-likelihood on real datasets. $95\%$ confidence intervals (bootstrap) show. \colorbox{SpringGreen}{Green} indicates that a 2-Cats model was better than the best baseline (indicated in \colorbox{OrangeRed}{Red}). Underscores highlight ties with best model.}
\scriptsize
\begin{tabular}{lrrrr}
\toprule
& {Boston} & {INTC-MSFT} & {GOOG-FB} \\
\midrule
Par       & \underline{$-0.16 \pm 0.09$} & $-0.05 \pm 0.07$ & $-0.88 \pm 0.08$ \\
Bern      & $-0.09 \pm 0.06$ & $-0.03 \pm 0.05$ & $-0.70 \pm 0.04$ \\
PBern     & $-0.11 \pm 0.07$ & $-0.02 \pm 0.06$ & $-0.62 \pm 0.04$ \\
PSPL1     & $-0.06 \pm 0.14$ & $-0.02 \pm 0.06$ & $-0.72 \pm 0.14$ \\
PSPL2     & $-0.05 \pm 0.14$ & $-0.02 \pm 0.06$ & $-0.83 \pm 0.07$ \\
TTL0         & \underline{$-0.16 \pm 0.08$} & $-0.06 \pm 0.06$ & $-1.00 \pm 0.07$ \\
TTL1      & \underline{$-0.18 \pm 0.09$} & $-0.06 \pm 0.06$ & $-1.00 \pm 0.08$ \\
TTL2      & \underline{$-0.18 \pm 0.09$} & $-0.06 \pm 0.06$ & $-0.71 \pm 0.16$ \\
TLL2    & \underline{$-0.15 \pm 0.09$} & $-0.04 \pm 0.05$ & $-0.95 \pm 0.13$ \\
MR        & $-0.07 \pm 0.06$ & $-0.01 \pm 0.05$ & $-0.84 \pm 0.07$ \\
Probit      & $-0.10 \pm 0.06$ & $-0.03 \pm 0.07$ & $-0.92 \pm 0.06$ \\
\midrule
ACNet          & \underline{$-0.28 \pm 0.11$} & \cellcolor{OrangeRed}\underline{$-0.18 \pm 0.08$} & $-0.91 \pm 0.12$ \\
GEN-AC         & \cellcolor{OrangeRed}\underline{$-0.29 \pm 0.11$} & \underline{$-0.17 \pm 0.07$} & $-0.75 \pm 0.10$ \\
NL             & \underline{$-0.28 \pm 0.16$} & \underline{$-0.16 \pm 0.08$} & \cellcolor{OrangeRed}$-1.09 \pm 0.13$ \\
IGC            & $0.07  \pm 0.08$ & $0.14  \pm 0.04$ & $-0.30 \pm 0.02$ \\
\midrule
2-Cats-FLEX-L      & $0.38  \pm 0.71$ & $0.19  \pm 0.08$ & $-1.15 \pm 0.07$ \\
2-Cats-FLEX-G      & $0.50  \pm 0.43$ & $0.23  \pm 0.09$ & $-1.01 \pm 0.07$ \\
\midrule
2-Cats-G      & \cellcolor{SpringGreen}\underline{$-0.30 \pm 0.09$} & \cellcolor{SpringGreen}\underline{$-0.21 \pm 0.05$} & \cellcolor{SpringGreen}\underline{$-1.75 \pm 0.04$} \\
2-Cats-L      & \cellcolor{SpringGreen}\underline{$-0.27 \pm 0.07$} & $-0.07 \pm 0.06$ & \cellcolor{SpringGreen}\underline{$-1.65  \pm 0.08$} \\
\bottomrule
\end{tabular}
\label{tab:appn}
\end{table}
In Table~\ref{tab:appn}, we show the results for these models on real datasets. The definitions of the models are as follows: (2-Cats-P Gaus.) A parametric mixture of 10 Gaussian Copula densities.; (2-Cats-P Frank) Similar to the above, but is a mixture of 10 Frank Copula densities; (2-Cats-FLEX-G) A Flexible model version where the final activation layer is Bivariate Gaussian CDF; (2-Cats-FLEX-L) A Flexible version of the model where the final activation layer is Bivariate Logistic CDF.

Models were trained with the same hyperparameters and network definition as the ones in our main text.

\section{Appendix Baseline Source Code} \label{appn:code}

\begin{table}[ttt!]
\centering
\caption{Source Code Used as Baselines}
\scriptsize
\begin{tabular}{llr}
\toprule
& & Link  \\
\midrule
\multirow{11}{*}{\rotatebox[origin=c]{90}{{\em Non-Deep Learn}}} & Par & https://cran.r-project.org/web/packages/VineCopula/index.html \\
&Bern      & https://cran.r-project.org/web/packages/kdecopula/index.html \\
&PBern     & https://rdrr.io/cran/penRvine/ \\
&PSPL1     & https://rdrr.io/cran/penRvine/ \\
&PSPL2     & https://rdrr.io/cran/penRvine/ \\
&TTL0         & https://cran.r-project.org/web/packages/kdecopula/index.html \\
&TTL1      &https://cran.r-project.org/web/packages/kdecopula/index.html \\
&TTL2      & https://cran.r-project.org/web/packages/kdecopula/index.html \\
&TLL2    & https://cran.r-project.org/web/packages/kdecopula/index.html \\
&MR        & https://cran.r-project.org/web/packages/kdecopula/index.html \\
&Probit      &https://cran.r-project.org/web/packages/kdecopula/index.html\\
\midrule
\multirow{3}{*}{\rotatebox[origin=c]{90}{{\em Deep Learn}}}& ACNet          & https://github.com/lingchunkai/ACNet\\
&GEN-AC         & https://github.com/yutingng/gen-AC\\
&NL             & https://github.com/pawelc/NeuralLikelihoods0\\
&IGC            & https://github.com/TimCJanke/igc\\
\bottomrule
\end{tabular}
\label{tab:code}
\end{table}
The links for baseline source code are in Table~\ref{tab:code}.


\end{document}